\definecolor{darkred}{RGB}{128,0,0}         
\definecolor{icmlblue}{RGB}{12,44,132}  
\newtheorem{definition}{Definition}[section]
\newtheorem{theorem}{Theorem}[section]
\newtheorem{lemma}{Lemma}[section]
\newtheorem{proposition}{Proposition}[section]
\renewcommand{\check}{\ding{51}}
\newcommand{\cross}{\ding{55}}
\newcommand{\hb}[1]{{\color{blue}}}
\title{Hierarchical Koopman Diffusion: Fast Generation with Interpretable Diffusion Trajectory}
\author{%
  Hanru Bai \\
  Fudan University\\
\texttt{hrbai23@m.fudan.edu.cn} \\
   \And
   Weiyang Ding\thanks{Corresponding author} \\
  Fudan University \\
\texttt{dingwy@fudan.edu.cn} \\
   \AND
   Difan Zou \\
  The University of Hong Kong \\
   \texttt{dzou@cs.hku.hk} \\
}
\begin{document}

\maketitle

\begin{abstract}
Diffusion models have achieved impressive success in high-fidelity image generation but suffer from slow sampling due to their inherently iterative denoising process. While recent one-step methods accelerate inference by learning direct noise-to-image mappings, they sacrifice the interpretability and fine-grained control intrinsic to diffusion dynamics, key advantages that enable applications like editable generation.  To resolve this dichotomy, we introduce \textbf{Hierarchical Koopman Diffusion}, a novel framework that achieves both one-step sampling and interpretable generative trajectories.  Grounded in Koopman operator theory, our method lifts the nonlinear diffusion dynamics into a latent space where evolution is governed by globally linear operators, enabling closed-form trajectory solutions. This formulation not only eliminates iterative sampling but also provides full access to intermediate states, allowing manual intervention during generation. To model the multi-scale nature of images, we design a hierarchical architecture that disentangles generative dynamics across spatial resolutions via scale-specific Koopman subspaces, capturing coarse-to-fine details systematically. We empirically show that the Hierarchical Koopman Diffusion not only achieves competitive one-step generation performance but also provides a principled mechanism for interpreting and manipulating the generative process through spectral analysis. Our framework bridges the gap between fast sampling and interpretability in diffusion models, paving the way for explainable image synthesis in generative modeling.
\end{abstract}

\section{Introduction}
\hb{may be we need to be a bit careful about whether there exists papers that study how to enable precise edit for distilled diffusion model.}

Diffusion models \citep{sohl2015deep,ho2020denoising,song2020denoising} have achieved remarkable success in image generative tasks \cite{dhariwal2021diffusion}. However, despite producing high-fidelity samples, the sampling process of diffusion models typically requires an expensive iterative procedure, which limits their applicability in real-world scenarios \cite{luo2024one}. Accelerating sampling in diffusion models thus remains a critical challenge. To overcome this limitation, recent work has focused on efficient one-step inference for diffusion models to replace the costly iterative sampling process.

Existing approaches toward this goal fall into several distinct paradigms. A widely adopted paradigm is distillation-based methods \cite{yin2024one, zhu2024slimflow}, which aim to distill pre-trained diffusion models into efficient one-step generators. Among these, Knowledge Distillation (KD) \cite{luhman2021knowledge}, Progressive Distillation (PD) \cite{salimans2022progressive}, and (distilled) Rectified Flow \cite{liu2022rectified, lee2024improving} are considered classical approaches, laying the foundation for recent advances in one-step generation. Another influential direction involves consistency models \cite{song2023consistency,song2023improved}, which learn time-consistent mappings from noisy inputs to clean data.  

Although these methods achieve strong performance in one-step image generation, they fundamentally rely on learning direct noise-to-image mappings, bypassing the temporally coherent denoising trajectory upon which diffusion models are built. As a result, they lack access to intermediate generative states of the diffusion trajectory that are crucial for interpreting and controlling sample evolution from noise to image, thus leading to limited interpretability of the generation process. This lack of interpretability not only obscures how semantic and structural information gradually emerges during sampling but also limits the ability to intervene at specific stages along the generative trajectory—an ability that enables controllable image synthesis at inference time. \footnote{Some distillation-based fast generation methods have explored multi-step editing for text-to-image, which is beyond the scope of this work. We focus on trajectory-level intervention during one-step generation.}

Our goal is to develop an explicitly interpretable one-step generation framework, which is
not merely to achieve high generation quality in a single step, but to provide a principled understanding of the generative process and thus enable fine-grained control along the diffusion trajectory. Motivated by Koopman operator theory \cite{mauroy2020koopman, otto2021koopman}, which projects the nonlinear dynamics into an observable space where evolution is linear via function space lifting, we propose an \textbf{\textit{explicit}} one-step generation paradigm that mapping the entire deterministic sampling trajectory of diffusion models into a latent space where closed-form ODE solutions exist. 
This latent space, theoretically grounded in Koopman theory, is referred to as the Koopman space. In this space, the dynamics of the diffusion process are governed by a linear operator, thus revealing an explicit, analytically tractable form for the evolution to enable an interpretable mapping from noise to data. We realize this framework by jointly learning the mapping and its associated linear operator.
In our explicit framework, all intermediate states along trajectories are analytically accessible, thus naturally allowing for introducing additional supervision on these states to guide the noise-to-image mapping more precisely and enabling fine-grained control of the generation dynamics, a capability not afforded by implicit methods.

While the Koopman-based modeling provides a principled way to enable explicit one-step sampling, its standard formulations operate in a single latent space, implicitly assuming uniform dynamics across all spatial and semantic scales. However, generative processes, particularly in visual domains, exhibit inherently multi-scale behaviors: global structures emerge early and evolve slowly, while fine-grained textures form later with more rapid variations \cite{tumanyan2023plug}. Ignoring this scale-specific nature limits the model's capacity to represent the full complexity of image generation. To address this limitation, we reformulate a \textit{\textbf{hierarchical}} Koopman modeling framework that explicitly decomposes the generative dynamics across multiple spatial resolutions. Features at different scales are extracted via a U-Net encoder and projected into separate Koopman subspaces, each governed by its own linear operator. This design allows the model to track the evolution of visual content from coarse layout to fine detail, aligning with the hierarchical nature of visual perception and offering improved generation fidelity. 
Our key contributions are summarized as follows:
\begin{itemize}[leftmargin=*]
\item We propose a novel interpretable one-step generation paradigm, referred to as Hierarchical Koopman Diffusion (HKD), via hierarchical Koopman dynamics. This paradigm uniquely integrates explicit intermediate generative states into the process, enabling control along the diffusion trajectory. Moreover, our framework provides a novel analytical aspect for diffusion models using spectral tools in dynamical systems theory to analyze the underlying generative mechanisms. 
\item We provide a theoretical justification that our Koopman explicit formulation is provably more expressive than directly learning a black-box mapping from noise to image using standard neural networks for learning the diffusion process.
\item We conduct experiments on the CIFAR-10 and FFHQ datasets to demonstrate the competitive one-step generation performance of our proposed framework. 
Beyond generation quality, we interpret the underlying generative dynamics via principled spectral analysis, revealing a quantitative correspondence between spectral components and semantic image attributes. Moreover, we further justify the interpretability of our framework through an image editing experiment that targets frequency-specific intervention at the intermediate stage of the diffusion trajectory.
\end{itemize}

\section{Background}
\paragraph{Diffusion Models.}
We consider the continuous-time diffusion models \cite{song2020score}, where data is generated by reversing a stochastic process that gradually adds Gaussian noise. Let $p_{\mathrm{data}}(\boldsymbol{x})$ denote the data distribution. The diffusion process is described by the stochastic differential equation (SDE): $\textstyle\mathrm{d}\boldsymbol{x}_t = \boldsymbol{\mu}(\boldsymbol{x}_t,t)\mathrm{d}t + \sigma(t)\mathrm{d}\boldsymbol{w}_t$, where $t \in [0,T]$,  $\boldsymbol{\mu}(\cdot,\cdot)$ and $\sigma(\cdot)$ are the drift and diffusion coefficients, respectively. $\{\boldsymbol{w}_t\}_{t \in [0,T]}$ represents standard Brownian motion. The distribution of $\boldsymbol{x}_t$ is $p_t(\boldsymbol{x})$, with $p_0(\boldsymbol{x}) \equiv p_{\mathrm{data}}(\boldsymbol{x})$. Notably, there exists an ordinary differential equation (ODE), called the Probability Flow ODE, whose solution trajectories sampled at $t$ follow $p_t(\boldsymbol{x})$:
\begin{equation}
\label{ode of diffusion sampling}
\textstyle\mathrm{d}\boldsymbol{x}_t=\left[\boldsymbol{\mu}(\boldsymbol{x}_t,t)-\frac{1}{2}\sigma(t)^2\nabla\log p_t(\boldsymbol{x}_t)\right]\mathrm{d}t,
\end{equation}
where $\nabla\log p_t(\boldsymbol{x}_t)$ is the score function of $p_t(\boldsymbol{x}_t)$.

\paragraph{Koopman Operators.}
Koopman theory provides a framework for analyzing nonlinear dynamical systems by transforming them into a linear form. The formal definition of the Koopman operator is given below.

\begin{definition}[Koopman Operator \cite{mezic2005spectral}] \label{def:koopman} Consider a finite-dimensional state space $\mathcal{X} \subseteq \mathbb{R}^n$ with state evolution described by ${\boldsymbol{x}_{t+\Delta t}=\Phi(\boldsymbol{x}_t), t\in[t_0, t_1]}$, where $\Phi: \mathcal{X} \to \mathcal{X}$ is the state transition operator. The Koopman operator $\mathcal{K}$ is a linear operator that acts on an infinite-dimensional space of observable function $g: \mathbb{R}^n \to \mathbb{R}$, such that
\begin{equation}
\textstyle
(\mathcal{K}\circ g)(\boldsymbol{x})=g(\Phi(\boldsymbol{x})).
\end{equation}
\end{definition}
Instead of modeling the state $\boldsymbol{x}_t$ directly, the Koopman operator captures the evolution of observables $\boldsymbol{z}_t = \boldsymbol g(\boldsymbol x)\triangleq [g_1(\boldsymbol{x}), \cdots, g_m(\boldsymbol{x})]^\top$ in a lifted space, where each $g_i$ is a component of the observable function ${\bf{g}}$. In continuous time, this yields a linear system $\mathrm{d}\boldsymbol{z}_t/\mathrm{d}t = \boldsymbol{A}\boldsymbol{z}_t$, enabling spectral analysis of nonlinear dynamics. Formal details on approximating with finite observables are in App. \ref{app:Koopman Theory}.

\section{Hierarchical Koopman Diffusion for Fast Generation}

We propose a novel denoising generative model called Hierarchical Koopman Diffusion (HKD) that enables one-step deterministic sampling for diffusion models via Koopman operator theory. 
By leveraging the Koopman operator’s ability to use tools of linear dynamics such as spectral analysis, our formulation not only enables efficient one-step ODE sampling but also offers a principled dynamical interpretation of generative modeling. 
This section is organized as follows: we first introduce the theoretical formulation of our framework (Sec. \ref{sec3.1:Framework Formulation}), then describe the corresponding supervision strategy (Sec. \ref{sec3.2:Trajectory Consistency Loss}), and conclude with implementation details (Sec. \ref{sec3.3:Implementation}). We subsequently present theoretical results that justify the superiority of the proposed method under ideal conditions (Sec. \ref{sec3.5:Theoretical Results}).  

\begin{figure}[t]
    \centering
\includegraphics[width=0.95\textwidth]{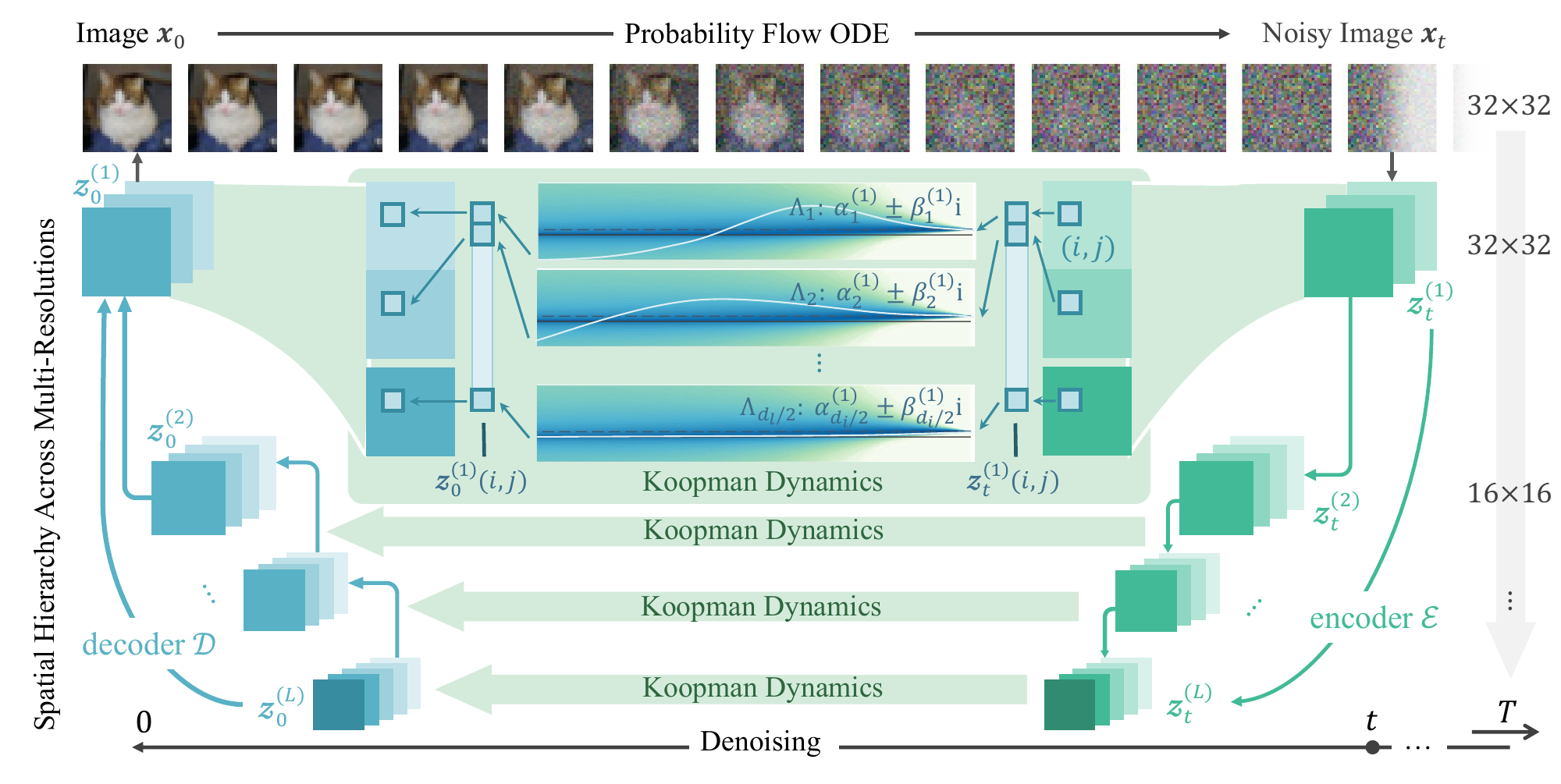}
    \caption{The framework of the proposed method. The HKD model first hierarchically extracts different-level features from the given noisy image at any time $t$ by \textcolor[RGB]{49, 178, 142}{encoder $\mathcal E$}. Secondly, the \textcolor[RGB]{38, 91, 57}{Koopman dynamics model} is applied for each level to the skips and the bottleneck. Last, \textcolor[RGB]{73, 168, 186}{a uniform decoder $\mathcal D$} performs the mapping from the Koopman spaces back to the image space. }
    \label{fig:framework}
\end{figure}

\subsection{Framework Formulation}
\label{sec3.1:Framework Formulation}

\paragraph{Encoder.}
Given the trajectory $\{\boldsymbol{x}_t \}_{t\in [\epsilon, T]}$ satisfing ODE in Eq. \eqref{ode of diffusion sampling}, we construct a hierarchical Koopman representation by applying an encoder 
\begin{equation*}
\textstyle\boldsymbol{\mathcal{E}}_{\boldsymbol{\theta}}: \boldsymbol{x}_t \in \mathbb{R}^{C \times H \times W} \mapsto \{\boldsymbol{z}_t^{(l)} \in \mathbb{R}^{d_l \times h_l \times w_l}\}_{l=1}^{L}.
\end{equation*}
$\boldsymbol{\mathcal{E}}_{\boldsymbol{\theta}}$ adopts a U-Net style downsampling architecture, just as the right part of Fig. \ref{fig:framework}. Here, $d_l$ is the number of latent channels and $(h_l, w_l)$ is the image resolution at level $l$. The encoder approximates independent Koopman observable functions $\boldsymbol{g}^{(l)}: \mathbb{R}^{C\times H\times W} \to \mathbb{R}^{d_l\times h_l\times w_l}$ at each level by obtaining $\boldsymbol{z}_t^{(l)} \approx \boldsymbol{g}^{(l)}(\boldsymbol{x}_t)$, where $\boldsymbol{g}^{(l)}$ captures the dynamics corresponding to its spatial scale. 

\paragraph{Hierarchical Koopman Subspace.}
Each latent feature $\boldsymbol{z}_t^{(l)} \in \mathbb{R}^{d_l \times h_l \times w_l}$ is extracted at level $l$, with $\boldsymbol{z}_t^{(l)}(i,j) \in \mathbb{R}^{d_l}$ denoting the feature vector at spatial position $(i,j)$.
We model $\boldsymbol{z}_t^{(l)}(i,j)$
evolves linearly under a spatially-varying linear operator $\boldsymbol{A}^{(l)}(i,j) \in \mathbb{R}^{d_l\times d_l}$, which is specific to each spatial location $(i,j)$, just as the middle part of Fig. \ref{fig:framework}:
\begin{equation} \textstyle\frac{\mathrm{d} \boldsymbol{z}_t^{(l)}(i,j)}{\mathrm{d}t} = \boldsymbol{A}^{(l)}(i,j) ~\boldsymbol{z}_t^{(l)}(i,j), \; \forall (i,j),\; t\in[\epsilon, T]. \end{equation}

The latent evolution at each position thus follows an independent linear dynamical system governed by its local Koopman operator. Allowing $\boldsymbol{A}^{(l)}(i,j)$ to vary across spatial locations enables the model to capture heterogeneous temporal-frequency behaviors at different regions, thus leading to finer-grained and spatially adaptive dynamics modeling. 

Without loss of generality, $\boldsymbol{A}^{(l)}(i,j)$ could be modeled in the form of block-diagonalizable, i.e.,
\begin{equation}
\label{A_form}
\textstyle\boldsymbol{A}^{(l)}(i,j) = \mathrm{diag}\{\boldsymbol{\Lambda}_1^{(l)}(i,j), \boldsymbol{\Lambda}_2^{(l)}(i,j), \cdots, \boldsymbol{\Lambda}_k^{(l)}(i,j), \cdots, \boldsymbol{\Lambda}_{d_l/2}^{(l)}(i,j) \}
\end{equation}
with $d_l$ even and each block $\boldsymbol{\Lambda}_k^{(l)}(i,j) \in \mathbb{R}^{2\times 2}$ corresponding to a pair of complex conjugate eigenvalues $\alpha_k^{(l)} \pm i \beta_k^{(l)}$ of $\boldsymbol{A}^{(l)}(i,j)$, given explicitly by
$
\textstyle\boldsymbol{\Lambda}_k^{(l)}(i,j) =
\left[\substack{
~\alpha_k^{(l)}(i,j) ~~~\beta_k^{(l)}(i,j) \\
-\beta_k^{(l)}(i,j) ~~\alpha_k^{(l)}(i,j)}\right]
$.
This reduces the number of parameters compared to modeling a full matrix. The rationality of this modeling can be guaranteed by the Prop. \ref{appprop:Ablockdiag}, App. \ref{app:Proposition Ablockdiag}.
Due to the linear evolution in the Koopman space, the mapping between latent states at time $s$ and $t$ for level $l$ and location $(i,j)$ can be explicitly expressed as 
\begin{equation}
\textstyle\boldsymbol{z}_{t}^{(l)}(i,j) = e^{\boldsymbol{A}^{(l)}(i,j)(t-s)} \boldsymbol{z}_{s}^{(l)}(i,j).
\end{equation}
The explicit formulation enables the network to learn sufficient one-step mappings from $\boldsymbol{z}_t, \forall t\in[\epsilon, T]$ to $\boldsymbol{z}_{\epsilon}$ during the training process. Consequently, it allows direct mapping from $\boldsymbol{w}_T$ to $\boldsymbol{w}_{\epsilon}$ in the inference time, thus achieving efficient one-step sampling without the need for iterative integration. 
\hb{this argument is not clear to me, how does the above mapping function connect to the training? training what?}

\paragraph{Decoder.}
Finally, at the target time step $\epsilon$, the evolved set of evolved latent features $\{\boldsymbol{z}_\epsilon^{(l)}\}_{l=1}^L$ is decoded to generate the sample $\boldsymbol{x}_\epsilon$. The decoding process is performed by a decoder $\mathcal{D}{\boldsymbol{\phi}}$, which implements the mapping 
\begin{equation*}
\textstyle\mathcal{D}_{\boldsymbol{\phi}}:\{{\boldsymbol{z}_{\epsilon}^{(l)}}\}_{l=1}^L\mapsto\boldsymbol{x}_{\epsilon}\in\mathbb{R}^{C\times H\times W}.
\end{equation*}
At each decoding level, the decoder upsamples the latent features from the lower-resolution level and integrates them with skip features obtained by evolving the encoder outputs through Koopman dynamics, just as the left part of Fig. \ref{fig:framework}.

\subsection{Intermediate State Supervision: Trajectory Consistency Loss}
\label{sec3.2:Trajectory Consistency Loss}
A key advantage of our explicit framework over implicit mappings is the ability to supervise intermediate states during generation. Unlike conventional methods that supervise only the endpoint, our approach explicitly constrains the trajectory from noise to image by enforcing consistency with expected denoising dynamics along the path.
Hence, we introduce a new trajectory consistency loss for the supervision of intermediate states. Fig.~\ref{fig:visual_consistency} presents the reconstructed images from noisy inputs at all time steps, showcasing the consistency properties promoted by our proposed loss function.

\begin{definition}[Trajectory Consistency Loss] 
\label{def:t-trajectory} 
The trajectory consistency loss is defined as
\begin{align}
\label{def:t-trajectory}
\textstyle\mathcal L_{t\text{-consist}} 
\triangleq & \mathbb E_{t\sim \mathcal U\,[\epsilon , T)}\Big[d\Big(\mathcal D_{\boldsymbol{\phi}} [\{\textrm e^{(\epsilon-t)\boldsymbol A^{(l)}}\mathcal E_{\boldsymbol \theta }^{(l)}(\boldsymbol x_{t})\}_{l=1}^L], \boldsymbol x_\epsilon\Big)\Big], 
\end{align}
where the expectation is taken with respect to $t \sim \mathcal U\,[\epsilon , T)$, and $\boldsymbol{x}_{\epsilon}$ is given by Eq. \eqref{ode of diffusion sampling}. Distance $d(\cdot , \cdot )\ge 0$ is in the image space and equals 0 if and only if the argument variables are the same.
\end{definition}

\begin{figure}[t]
    \centering
\includegraphics[width=\textwidth]{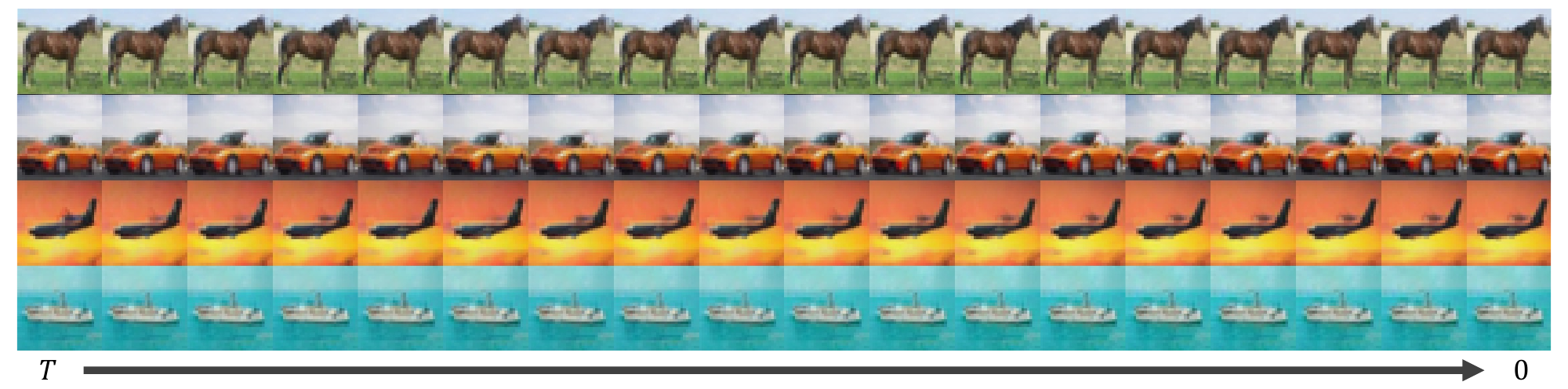}
    \caption{Image reconstruction from noisy inputs at time 
$t$ highlights the consistency enforced by our framework. The left-most image is obtained from $\boldsymbol x_T$ and the right-most is from $\boldsymbol x_0$. }
    \label{fig:visual_consistency}
\end{figure}

This trajectory consistency loss enforces that any intermediate state $\boldsymbol{x}_{t}$, after being encoded and evolved through Koopman dynamics to time $\epsilon$, yields a decoded prediction that matches the ground truth image $\boldsymbol{x}_{\epsilon}$ obtained by Eq. \eqref{ode of diffusion sampling}.  
However, the most direct supervision would be to enforce alignment between the encoder-derived Koopman representation at time $t$ and its analytically evolved counterpart from time $T$ within the Koopman space.
Although conceptually straightforward, enforcing consistency directly in the latent space may lead to suboptimal results in practice, as it reduces training flexibility and introduces gradients that poorly reflect perceptual discrepancies. Therefore, we instead adopt an alternative image-space formulation in Eq. \eqref{def:t-trajectory}, which enables the use of perceptually meaningful distance metrics, such as the Learned Perceptual Image Patch Similarity (LPIPS), to better capture semantic fidelity. Importantly, we show in App. \ref{app:Proposition equivalence} that, under structural assumptions, minimizing the trajectory consistency loss in image space is theoretically equivalent to minimizing its latent-space counterpart.


\subsection{Implementation}
\label{sec3.3:Implementation}
\paragraph{Training.} 
We train the model using a total loss 
$
\mathcal{L} = \mathcal{L}_{t\text{-consist}} + \mathcal{L}_{\text{recon}},
$
where $\mathcal{L}_{t\text{-consist}}$ enforces consistency along the diffusion trajectory, and $\mathcal L_{\text{recon}} = d \Big(\mathcal D_{\boldsymbol{\psi}} (\{\textrm e^{(\epsilon-T)\boldsymbol A^{(l)}}\mathcal E_{\boldsymbol \theta }^{(l)}(\boldsymbol x_{T})\}_{l=1}^L),\boldsymbol{x}_{\epsilon}\Big)$ supervises accurate one-step mapping from noise to clean image. The distance function is defined as $d(\boldsymbol{x},\boldsymbol{y}) = \lambda_1 \mathcal{L}_{\text{MSE}} + \lambda_2 \mathcal{L}_{\text{LPIPS}}$, with $\lambda_2=1$ and $\lambda_1$ annealed to shift from coarse alignment to perceptual refinement. The full model, $\mathcal{\boldsymbol{\mathcal{E}}}_{\boldsymbol{\theta}}$, $\mathcal{D}_{\boldsymbol{\boldsymbol{\phi}}}$, and $\{\boldsymbol{A}^{(l)}\}_{l=1}^L$, is trained end-to-end. The training algorithm is presented in Alg. \ref{alg:training}. More training details are provided in App. \ref{app:Algorithm}.

\paragraph{One-Step Sampling.}
At inference time, we could obtain the final generated sample $\hat{\boldsymbol{x}}_{\epsilon}$ from noise $\boldsymbol{x}_T$ by 
$$
\textstyle\mathcal D_{\boldsymbol{\phi}} (\{\textrm e^{(\epsilon-T)\boldsymbol A^{(l)}}\mathcal E_{\boldsymbol \theta }^{(l)}(\boldsymbol x_{T})\}_{l=1}^L).
$$
\hb{maybe make it as single-line equation as this looks kind of important}Specifically,
Given a noise sample, we project it into the Koopman space, perform one-step evolution via Koopman dynamics, and decode the result back to the data space. The detailed sampling algorithm of sampling is presented in Alg. \ref{alg:sampling}. 

\paragraph{Koopman Spectral Analysis.}
Our framework offers a distinct perspective on diffusion generation through Koopman spectral analysis. Each block $\boldsymbol{\Lambda}_k^{(l)}(i,j)$ in Eq. \eqref{A_form} represents a local dynamical mode, with eigenvalues $\alpha_k^{(l)}(i,j) \pm i \beta_k^{(l)}(i,j)$ encoding the growth/decay and oscillation of the trajectory.  This explicit spectral interpretability further allows for controllable image generation by selectively modifying certain frequency components. More details of Koopman spectral analysis of our framework can be found in App. \ref{app:Koopman Spectral Analysis}.

{\small
\begin{algorithm}
\caption{The training algorithm. }
\label{alg:training}
\small
\begin{algorithmic}[1]
\State \textbf{Inputs:} Trajectory $\{\boldsymbol x_t\}_{t\in[0, T]}$ from a well-trained diffusion model; The number of samples $s$
\State \textbf{Outputs:} Network parameters $\boldsymbol \theta $ and $\boldsymbol \phi $; Koopman matrices $\{\boldsymbol A^{(l)}\}_{l=1}^L$
\State Initialize $\boldsymbol \theta $ and $\boldsymbol \phi $ by a pre-trained U-Net
\State Initialize $\boldsymbol A^{(l)}\gets\boldsymbol{O}, ~l=1, \cdots, L$
\For{ $i = 0$ \textbf{to} \text{num\_iter}$-1$}
    \State Sample $S_t=\{t_i\,|\,t_i\sim\mathcal U[0, T]\}_{i=1}^{s-1}~\cup~\{T\}$ \Comment{Uniformly sample the intermediate time}
    \State Reconstruct $\hat{\boldsymbol x}_\varepsilon=\mathcal D_{\boldsymbol{\phi}} (\{\textrm e^{(\epsilon-t)\boldsymbol A^{(l)}}\mathcal E_{\boldsymbol \theta }^{(l)}(\boldsymbol x_{t})\}_{l=1}^L)$ for $t\in S_t$ \Comment{Apply the HKD model}
    \State Let $\mathcal L=\sum_{t\in S_t}\Big[\Vert \hat{\boldsymbol x}_\varepsilon - \boldsymbol x_0\Vert + \Vert \mathcal  F(\hat{\boldsymbol x}_\varepsilon) - \mathcal F (\boldsymbol x_0)\Vert\Big]$ \Comment{$\mathcal F$ is the feature extractor in LPIPS}
    \State Update $\boldsymbol \theta $, $\boldsymbol \phi $ and $\boldsymbol A^{(l)}$ by the gradients of $\mathcal L$
\EndFor
\State \textbf{Return} $\boldsymbol \theta $, $\boldsymbol \phi $ and $\{\boldsymbol A^{(l)}\}_{l=1}^L$
\end{algorithmic}
\end{algorithm}}
{\small
\begin{algorithm}
\caption{The sampling algorithm. }
\label{alg:sampling}
\small
\begin{algorithmic}[1]
\State \textbf{Inputs:} Trained HKD including network parameters $\boldsymbol \theta $, $\boldsymbol \phi $ and Koopman matrices $\{\boldsymbol A^{(l)}\}_{l=1}^L$
\State \textbf{Outputs:} Predicted image $\hat{\boldsymbol x}_\varepsilon=\mathcal D_{\boldsymbol{\phi}} (\{\textrm e^{(\epsilon-T)\boldsymbol A^{(l)}}\mathcal E_{\boldsymbol \theta }^{(l)}(\boldsymbol x_{T})\}_{l=1}^L)$
\end{algorithmic}
\end{algorithm}}

\subsection{Theoretical Results}
\label{sec3.5:Theoretical Results}


In order to illustrate the feasibility of our proposed method, we theoretically analyze the error bounds caused by the proposed method, namely $err_{\text{HKD}}$, and by conventional one-step methods $err_{\text{one-step}}$. 
Intuitively, neural networks of the same size are faced with larger errors when estimating more complex functions. In the proposed formulation, the encoder and decoder are commonly simple due to the elementary observable function, hence, the HKD generation, serving as a composition of three functions, is overall simpler than end-to-end one-step methods. Algebraically speaking, we have $err_{\text{HKD}}\le err_{\text{one-step}}+O(\kappa)$ where $\kappa$ is a reasonably small error caused by the Koopman process. 

This inequality holds mainly because the proposed framework first transfers the complicated data space to the simpler Koopman space at a small cost and then controls the Koopman trajectory error by the explicit formulation in the Koopman theory. Thm. \ref{thm:main} provides an informal description of the conclusion where $\kappa=O(N^{-\frac{1}{2}})+o(m^{-\frac{r}{3}})$ is small and that substantiates the common knowledge with $N$ and $m$ being the size of the dataset and the dimension of the Koopman space respectively. A formal description and proof of the theorem are given in App. \ref{app:Formal Statement and Proof of Theorem 3.1}. 

\begin{theorem}[Comparison of Error Bounds for One-Step Diffusions]
\label{thm:main}
Let $err_{\text{HKD}}$ and $err_{\text{one-step}}$ be the ideal estimation errors of our proposed HKD model and an end-to-end one-step model with the same total number of activation functions we have,
\begin{displaymath}err_{\text{HKD}}\le err_{\text{one-step}} + O(\kappa ), 
\end{displaymath}
where $\kappa =\textstyle\max\left\{\frac{2\sqrt3\sigma m^2\rho _{\sup}}{\sqrt{N(\frac{\delta}3 -2m^{-\frac{r}{3}})}-\sqrt3\sigma m^2}, \frac{2\sqrt3\sigma m^2\rho _{\inf}}{\sqrt{N(\frac{\delta}3 -2m^{-\frac{r}{3}})}\cdot \rho _{\inf} - \sqrt3\sigma m^2}\right\} + o(m^{-\frac{r}{3}})$ with $N$ and $m$ dominating $\kappa$ being the size of the dataset and the dimension of the Koopman space respectively. The parameter $\sigma$ is the standard deviation of the observables, $\delta$ is $0$ when the network is perfectly trained, $\rho_{\inf}$ and $\rho_{\sup}$ are the true spectral range of the Koopman operator, and $r\to-\infty$ when the $m$-dimensional Koopman process adequately model the Koopman space. \hb{better to also specify $m$ here, what's $\mathcal B$? and what's $\sigma$ and $\rho$?}
\end{theorem}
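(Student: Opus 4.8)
The plan is to treat the true deterministic sampler --- the flow map $\Phi:\boldsymbol{x}_T\mapsto\boldsymbol{x}_\epsilon$ induced by Eq.~\eqref{ode of diffusion sampling} --- as a single target that \emph{both} models must approximate under the same activation budget $P$, and to exploit the fact that Koopman theory factors $\Phi$ into $\Phi=\mathcal{D}^\star\circ\mathcal{K}^\star\circ\mathcal{E}^\star$ in which the middle map is \emph{exactly linear}. First I would fix this factorization: by Definition~\ref{def:koopman}, $\mathcal{E}^\star=\boldsymbol{g}$ is the truncated $m$-dimensional observable map, $\mathcal{K}^\star$ is the closed-form propagator $\boldsymbol{z}_T\mapsto e^{\boldsymbol{A}(\epsilon-T)}\boldsymbol{z}_T$, and $\mathcal{D}^\star$ is a left inverse reconstructing $\boldsymbol{x}_\epsilon$ from the evolved observables. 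The HKD generator is the learned composition $\mathcal{D}_{\boldsymbol{\phi}}\circ e^{\widehat{\boldsymbol{A}}(\epsilon-T)}\circ\mathcal{E}_{\boldsymbol{\theta}}$, and $err_{\mathrm{HKD}}$ is its distance to $\Phi$ in the image-space metric $d$, while $err_{\text{one-step}}$ is the best distance to the same $\Phi$ achievable by a single network of size $P$. For the proof sketch I work in a single Koopman space, consistent with the scalar $m$ in the statement.

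Next I would split $err_{\mathrm{HKD}}$ by inserting $\mathcal{D}^\star\circ\mathcal{K}^\star\circ\mathcal{E}^\star$ and applying the triangle inequality, yielding three contributions propagated through the Lipschitz constants of the downstream maps: an encoder term $\mathrm{Lip}(\mathcal{D}^\star)\,\mathrm{Lip}(\mathcal{K}^\star)\,\|\mathcal{E}_{\boldsymbol{\theta}}-\mathcal{E}^\star\|$, a Koopman-evolution term $\mathrm{Lip}(\mathcal{D}^\star)\,\|e^{\widehat{\boldsymbol{A}}(\epsilon-T)}-e^{\boldsymbol{A}(\epsilon-T)}\|$ on the range of $\mathcal{E}_{\boldsymbol{\theta}}$, and a decoder term $\|\mathcal{D}_{\boldsymbol{\phi}}-\mathcal{D}^\star\|$. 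The first and third are pure neural-approximation errors of the simple observable map and its inverse; the second is the source of $\kappa$.

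For the $\kappa$ term I would control the spectral estimation error of the Koopman operator inferred from the $N$ sampled trajectories. Estimating $\boldsymbol{A}$ by a DMD-style least-squares regression on the lifted snapshot pairs and applying a matrix concentration (Bernstein-type) inequality yields, with high probability, $\|\widehat{\boldsymbol{A}}-\boldsymbol{A}\|\le\eta$ with $\eta:=\sqrt{3}\,\sigma m^2/\sqrt{N(\delta/3-2m^{-r/3})}$; here $m^2$ counts the entries of the $m\times m$ operator, $\sigma$ is the observable noise scale, and the effective sample term $N(\delta/3-2m^{-r/3})$ discounts the dataset size by the training residual $\delta$ and the $o(m^{-r/3})$ bias from truncating the infinite observable basis to dimension $m$. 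Feeding $\eta$ into a relative (Bauer--Fike / Neumann-series) perturbation bound for the spectrum of $\boldsymbol{A}$, and hence of the block-diagonal propagator $e^{\boldsymbol{A}(\epsilon-T)}$ associated with Eq.~\eqref{A_form}, produces the two competing ratios $\frac{2\eta\rho_{\sup}}{1-\eta}$ and $\frac{2\eta}{1-\eta/\rho_{\inf}}$, whose maximum is precisely the leading part of $\kappa$; the subtracted $\sqrt{3}\sigma m^2$ in each stated denominator is exactly $\eta$ eating into the spectral gap $[\rho_{\inf},\rho_{\sup}]$. Collecting the three contributions gives $err_{\mathrm{HKD}}\le(\text{approx. error of }\mathcal{E}^\star)+(\text{approx. error of }\mathcal{D}^\star)+O(\kappa)$.

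The closing step, and the one I expect to be the main obstacle, is to show $(\text{approx. error of }\mathcal{E}^\star)+(\text{approx. error of }\mathcal{D}^\star)\lesssim err_{\text{one-step}}$ under equal activation budget. This requires a quantitative approximation theorem with a complexity functional $\mathcal{C}$ obeying a matching rate $err(f,P)\asymp\mathcal{C}(f)\,P^{-\gamma}$ together with a composition lower bound $\mathcal{C}(\Phi)\gtrsim\mathcal{C}(\mathcal{E}^\star)+\mathcal{C}(\mathcal{D}^\star)$, so that splitting the budget $P$ between encoder and decoder makes the total HKD factor error no larger than the one-step error of the entangled composite $\Phi$. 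Choosing $\mathcal{C}$ (e.g.\ a Barron-type or Sobolev-smoothness norm) that simultaneously admits such a rate and the right composition inequality is the delicate part; the linearity of $\mathcal{K}^\star$ is what makes the composition lower bound plausible, since it injects no nonlinear complexity and merely rescales by its spectral radius. Once this complexity inequality is in hand, combining it with the decomposition above yields the claimed $err_{\mathrm{HKD}}\le err_{\text{one-step}}+O(\kappa)$.
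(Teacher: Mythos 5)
Your decomposition matches the paper's architecture: the paper likewise factors the generation map through an encoder $\mathfrak F$, a linear Koopman propagator, and a decoder $\mathfrak F^{-1}$, splits $err_{\text{HKD}}$ by the triangle inequality into $\Vert \mathcal D\boldsymbol z_0-\mathfrak F^{-1}\boldsymbol z_0\Vert+\Vert\mathfrak F^{-1}\boldsymbol z_0-\mathfrak F^{-1}\boldsymbol y_0\Vert$ plus an encoder term, and obtains $\kappa$ from a probabilistic operator-estimation bound $\mathbb P(\Vert(\mathcal K_t-\hat{\boldsymbol K}_t)\boldsymbol y_t\Vert\le\kappa_t\Vert\boldsymbol y_t\Vert)\ge 1-\delta$ propagated through the latent ODE by a Gr\"onwall-type comparison ($D_t=\frac{\kappa\zeta}{\rho}[e^{\rho(T-t)}-1]+D_Te^{\rho(T-t)}$). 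Your DMD-plus-concentration-plus-Bauer--Fike route to $\kappa$ is in fact more explicit than the paper's, which essentially asserts the bound; the resulting two competing ratios match the stated form.

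The genuine gap is the step you yourself flag as the main obstacle: showing that the encoder and decoder approximation errors together are dominated by $err_{\text{one-step}}$ under an equal activation budget. You leave the choice of complexity functional $\mathcal C$ open and suggest Barron- or Sobolev-type norms, but those do not generally satisfy the composition lower bound you need (a composite can have strictly smaller Barron norm than either factor), so that route would stall. The paper's entire technical apparatus exists to close exactly this step: it defines the \emph{simplicial error} $\varepsilon_N(\cdot)$ via Delaunay triangulation (Def.~\ref{def:sim-error}), proves the two-sided bound $\max\{\varepsilon_N(\Omega),\varepsilon_N(\Gamma)\}\le\varepsilon_N(f)\le\inf_\Xi\Vert[\varepsilon_\Xi(\Omega),\varepsilon_\Xi(\Gamma)]\Vert$ (Prop.~\ref{prop:func-error}), and converts activation count into simplex count via Lem.~\ref{app:Lemma C.1} ($n_a$ ReLUs yield at most $\lceil 2n_a/(m+1)\rceil$ simplices). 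The decisive observation is that the noise space $\Xi$ and the Koopman space $\Psi$ are compact and convex, so their simplicial errors vanish; hence the complexity of the one-step map $\mathcal F$ and of the pair $(\mathfrak F,\mathfrak F^{-1})$ both collapse onto the data manifold $\Omega$, giving $\varepsilon_{\lceil 2n_a/(n^2+1)\rceil}(\mathcal F)\ge\varepsilon_{\lceil n_a/(D+1)\rceil}(\mathfrak F^{-1})+\varepsilon_{\lceil n_a/(n^2+1)\rceil}(\mathfrak F)$, which is precisely the budget-splitting inequality your argument is missing. Without this (or an equivalent functional with the same domain/range collapse property), your proof does not reach the conclusion.
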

\begin{wrapfigure}{r}{0.3\textwidth}
    \centering
    \includegraphics[width=0.3\textwidth]{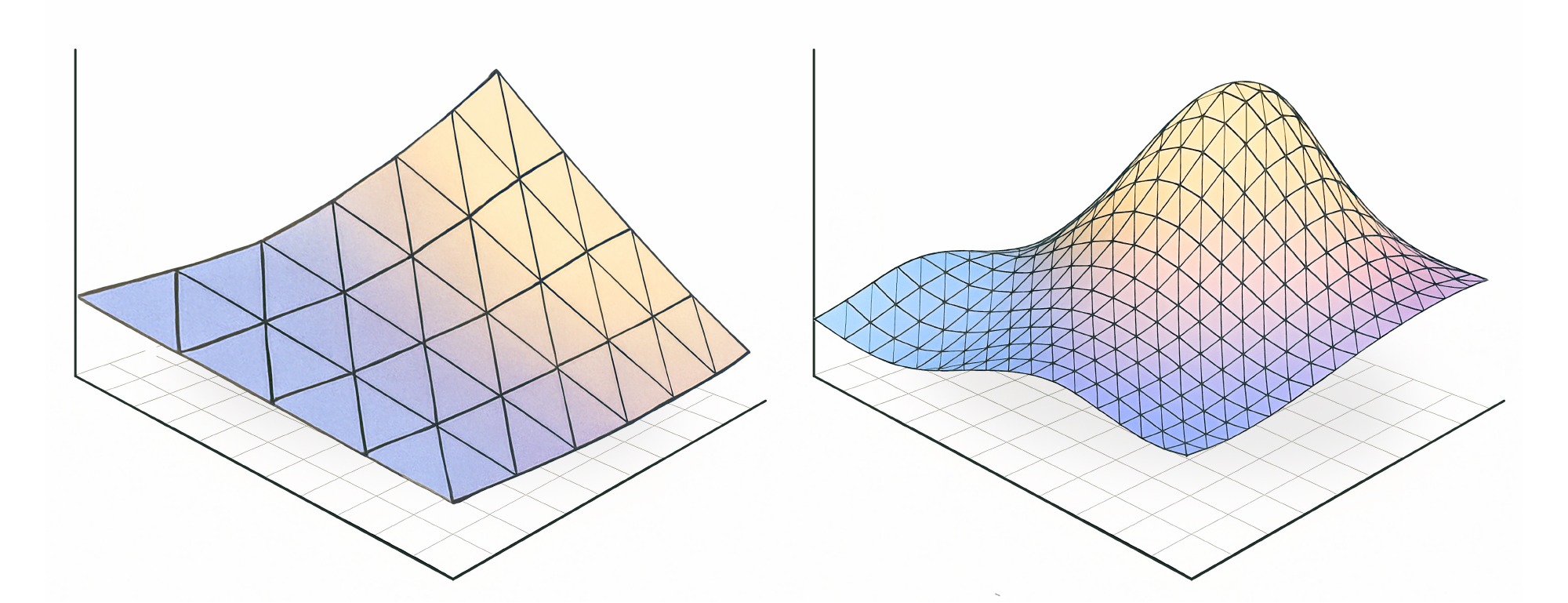}
    \caption{{More complex surfaces need denser triangulation for the same error.}}
    \label{fig:simplicial}
\end{wrapfigure}

To formally prove the above result, we first proposed to evaluate the network size by the number of activation functions it uses, as they are the ones causing non-linearity of the network. Secondly, we introduced the simplicial error as the deviation in the simplicial complex estimation. Fig. \ref{fig:simplicial} empirically shows that the metric of same-density triangulations reveals the complexity of 2D surfaces, demonstrating our idea. Consequently, the simplicial error quantitatively evaluates the complexity of the data and noise spaces and the mappings between them. Finally, using the ideas above, we analytically prove Thm. \ref{thm:main}.

\section{Experiments}
\hb{split the tables into mulple locations, now it looks too dense}
We conducted experiments to present the promising one-step generation capability of our framework compared with other one-step baseline methods (Sec. \ref{subsec:Compare with Prior One-step Generation Methods}). Notably, beyond generating high-quality images, we further provide empirical insights toward understanding the underlying dynamics of diffusion models through Koopman spectral analysis (Sec. \ref{subsec:Koopman Spectral Analysis for Generative Process}). This analysis demonstrates the dynamical interpretability of our approach and further enables controllable image editing (Sec. \ref{subsec:Controllable One-Step Image Editing}). An ablation study was finally performed to show the key contributions of our framework (Sec. \ref{subsec:Ablation Study}).
Extensive results highlight both the generation quality and interpretability advantages of our framework, distinguishing it from distillation- and consistency-model-based paradigms.

\subsection{Compare with Prior One-step Generation Methods}
\label{subsec:Compare with Prior One-step Generation Methods}

In this section, we evaluated our proposed framework on the CIFAR-10 \cite{krizhevsky2014cifar} and FFHQ datasets \cite{karras2019style}, focusing exclusively on one-step generation methods within the diffusion model family. 
We used the standard Fréchet Inception Distance (FID) \cite{heusel2017gans} metric and report FID-50k scores, following prior work \cite{zhu2024slimflow}. Our model employs the U-Net \cite{ronneberger2015u} architecture from EDM \cite{karras2022elucidating} as the backbone for both the encoder and decoder, initialized with pretrained weights. All models were trained using the Adam optimizer \cite{kinga2015method} with a constant learning rate of $1 \times 10^{-3}$ and a weight decay of 0.95. App. \ref{app:expdetails} provides additional implementation details of the experiments.

\begin{wrapfigure}{l}{0.5\textwidth}
    \small
    \centering
    \captionof{table}{Sample quality on CIFAR-10 dataset.}
    \label{table:cifar-un}
    \begin{tabular}{lcc}
    \toprule 
    \textnormal{Methods} & \text{NFE($\downarrow$)} & \text{FID($\downarrow$)}   \\
    \midrule 
    \multicolumn{3}{l}{\bf{Multi-Step Diffusion Models}} \\
    \midrule
    DDPM \cite{ho2020denoising} & 1000 & 3.17 \\
    \textnormal{Score SDE} \cite{song2020score}& 2000 & 2.38 \\
    DDIM \cite{song2020denoising}& 100 & 4.16 \\
    DDIM \cite{song2020denoising}& 10 & 13.36 \\
    EDM \cite{karras2022elucidating}& 35 & 1.97 \\
    EDM \cite{karras2022elucidating}& 15 & 5.62 \\
    \midrule
    \multicolumn{3}{l}{\bf{Diffusion Distillation}} \\
    \midrule
    \textnormal{KD} \cite{luhman2021knowledge}& 1&9.36 \\
    \textnormal{PD} \cite{salimans2022progressive}& 1&8.34 \\
    \textnormal{CD (LPIPS) } \cite{song2023consistency}& 1&3.55 \\
    \textnormal{DMD } \cite{yin2024one}& 1&3.77 \\
    \textnormal{1-Rectified flow} \cite{liu2022rectified}& 1&6.18 \\
    \textnormal{2-Rectified flow} \cite{liu2022rectified}& 1&4.85 \\
    
    \textnormal{3-Rectified flow} \cite{liu2022rectified}& 1&5.21 \\
    \textnormal{2-Rectified flow++} \cite{lee2024improving}& 1&3.38 \\
    \midrule
    \multicolumn{3}{l}{\bf{Consistency Model}} \\
    \midrule
    \textnormal{CT (LPIPS) } \cite{song2023consistency}& 1&8.70 \\
    \textnormal{CD (LPIPS) } \cite{song2023consistency} & 1&3.55 \\
    \textnormal{iCT} \cite{song2023improved}& 1&2.83 \\
    \textnormal{iCT-deep} \cite{song2023improved}& 1&2.51 \\
    \textnormal{ECM} \cite{geng2024consistency}& 1&3.60 \\
    \midrule
    \textnormal{HKD} & 1 &3.30 \\
    \bottomrule
    \end{tabular}
\end{wrapfigure}

\paragraph{Comparison to Prior Work.}
We evaluated our framework against two dominant one-step generation approaches: distillation-based methods and consistency models, with comparisons to multi-step diffusion baselines for context. 
The multi-step diffusion baselines include: (1) {\textbf{DDPM}} \cite{ho2020denoising}: The foundational denoising diffusion model; (2) \textbf{Score SDE} \cite{song2020score}: Continuous-time diffusion via reverse-time SDEs;
(3) {\textbf{DDIM}} \cite{song2020denoising}: Efficient deterministic sampler from diffusion ODEs;
(4) {\textbf{EDM}} \cite{karras2022elucidating}: State-of-the-art diffusion backbone (used in our framework).

For the distillation-based one-step paradigm, the comparison extends to traditional leading methods:
(1) {\textbf{KD}} \cite{luhman2021knowledge}: Traditional knowledge distillation method with direct teacher-student mimicry;
(2) {\textbf{PD}}  \cite{salimans2022progressive}: Progressive distillation with step reduction through sequential student-teacher training;
(3) {\textbf{CD (LPIPS)}} \cite{song2023consistency}: Consistency distillation with LPIPS loss;
(4) {\textbf{DMD}} \cite{yin2024one}: Distribution matching distillation with the distribution-matching loss; (5) and {\textbf{(distilled) ReFlow}} series: including 1-Rectified flow, 2-Rectified flow, 3-Rectified flow \cite{liu2022rectified} and 2-Rectified flow++ \cite{lee2024improving} with reflow and distillation. For the consistency-model-based method, we included: (1) {\textbf{CT}} \cite{song2023consistency}: Consistency training including CT (LPIPS), iCT \cite{song2023improved}, and iCT-deep \cite{song2023improved}, which provides a comparison to the end-to-end traditional consistency model, and its improved versions. (2) {\bf{ECM}} \cite{geng2024consistency}: Easy Consistency Models, which initializes the network weights with the ones from a pretrained score model.
More details of comparison methods are in App. \ref{app:comdetails}.

\paragraph{Results.}
While enhanced methods such as iCT-deep \cite{song2023improved} have pushed performance boundaries, they suffer from high sensitivity to hyperparameters. Moreover, state-of-the-art consistency training typically requires nearly a week of training on 8 GPUs \cite{geng2024consistency}, and remains unstable in practice \cite{lu2024simplifying}. Retraining with consistency distillation (CD) even results in a degraded FID of 10.53, as reported in \cite{kim2023consistency}. In contrast, our method achieved comparable performance within just 2–3 days on 8×V100 GPUs. 

\begin{wrapfigure}[11]{r}{0.4\textwidth}
    \small
    \centering    \captionof{table}{Sample quality on FFHQ.}
    \label{table:ffhq}
    \begin{tabular}{lcc}
    \toprule 
    \textnormal{Methods} & ~~\text{NFE($\downarrow$)}~~ & ~~\text{FID($\downarrow$)}~~   \\
    \midrule
    DDIM \cite{song2020denoising}& 10 & 18.30 \\
    \textnormal{EDM} \cite{karras2022elucidating}& 79 & 2.47 \\
    \textnormal{EDM} \cite{karras2022elucidating}& 15 &  9.85\\
    \textnormal{ECM} \cite{geng2024consistency} & 1 & 5.99 \\
    \midrule
    HKD &{1}&5.70\\
    \bottomrule
    \end{tabular}
\end{wrapfigure}

Moreover, we significantly improved the training stability thanks to two design choices: (1) the exponential formulation in the Koopman space ensures sufficiently large gradients for spectra with magnitudes near 1, mitigating the high-variance gradient issues commonly seen in consistency models \cite{silvestri2025training}; and (2) supervising the Koopman trajectory at multiple time points enables more stable and averaged spectrum estimation, leading to more reliable mode evolution modeling. Experimental evidence of the improved training stability on the CIFAR-10 dataset is provided in App. \ref{app:addexp4.1}.

Additional experiments on FFHQ 64x64 (Tab. \ref{table:ffhq}) validated the effectiveness of our method on the high-resolution and structurally complex dataset. See App. \ref{app:addexp4.1} for additional visualizations, per-image wall-clock timings for various generation methods, and conditional generation results.


\subsection{Koopman Spectral Analysis for Generative Process}
\label{subsec:Koopman Spectral Analysis for Generative Process}

\begin{figure}[t]
    \centering
\includegraphics[width=\textwidth]{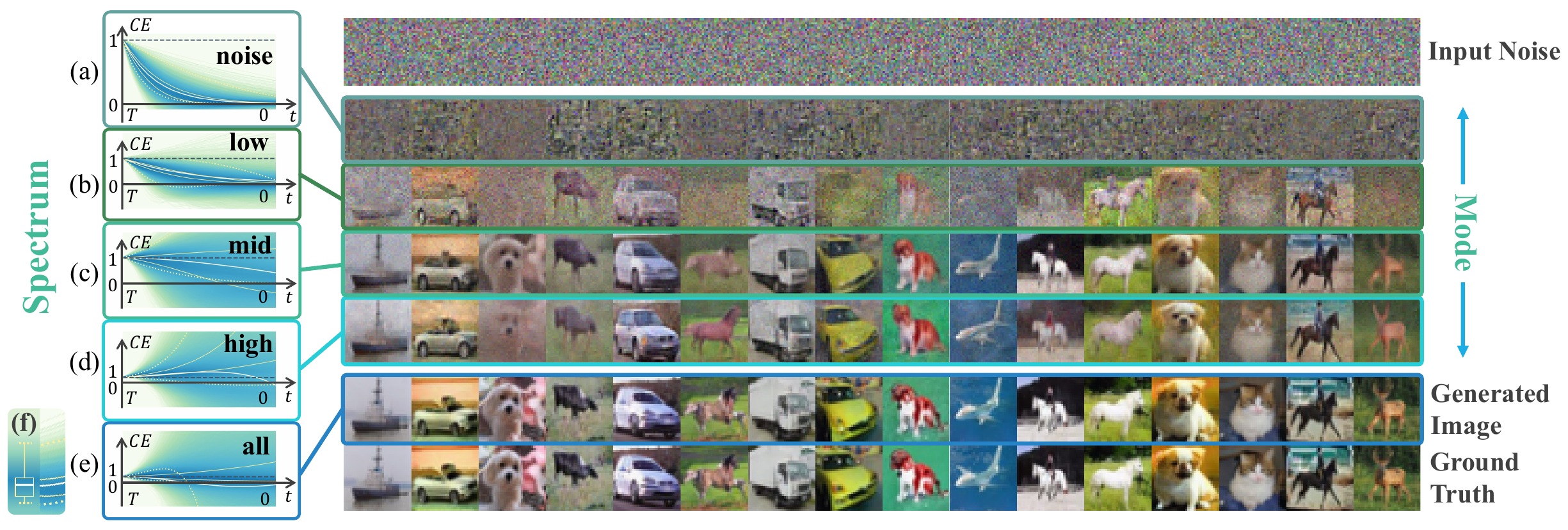}
    \caption{Visualization of Spectral Contributions in Generated Images Across Koopman Modes. (a) visualizes the contribution of Koopman spectrum with the smallest real-part magnitudes to the reconstructed state over time. It corresponds to the noisy part of the image. (b), (c) and (d) illustrate progressively larger spectral components and their reconstructed image components. }
    \label{fig:visual_spectrum}
\end{figure}

\begin{figure}[t]
    \centering
\includegraphics[width=\textwidth]{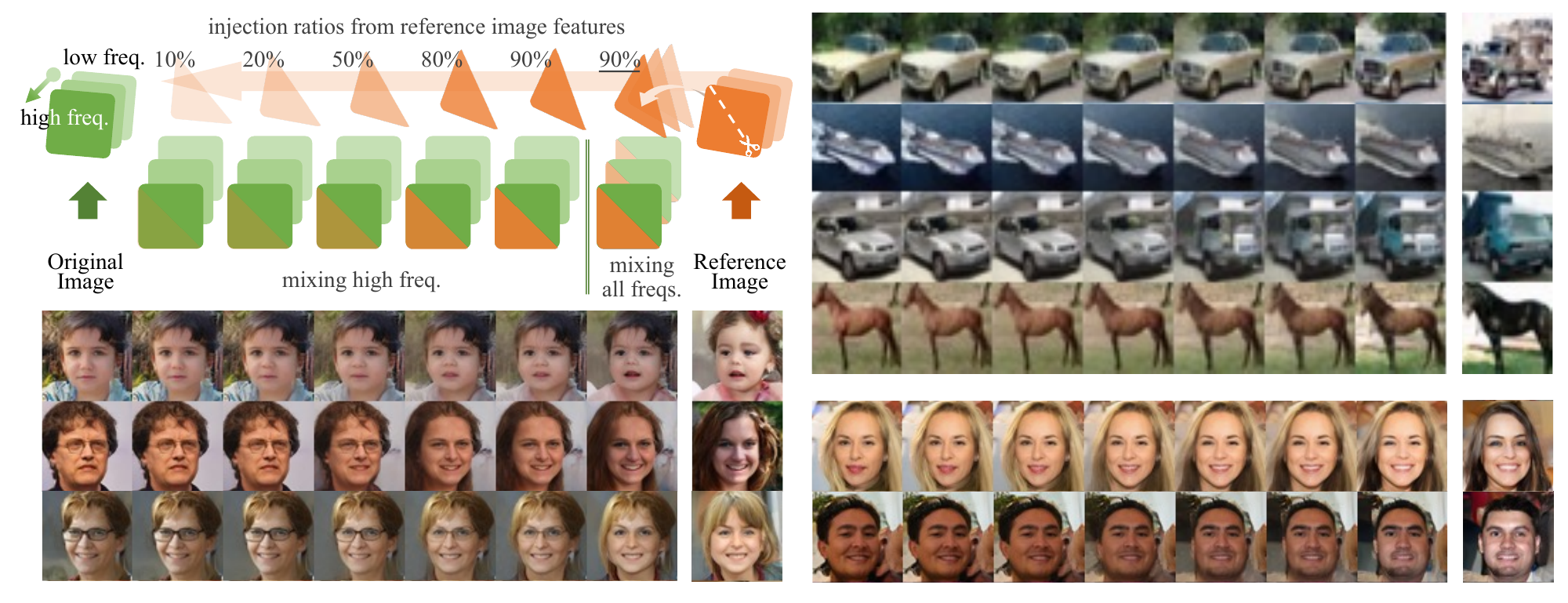}
    \caption{One-step image editing via frequency-aware interventions along the diffusion trajectory. We controlled the image generation by the high-frequency features from a reference image through injecting them into the lower-left half of the generating image at different mixing ratios (10\%, 20\%, 50\%, 80\%, 90\%). The modifications were performed at the midpoint of the Koopman trajectory. Columns 2-6 showcase the frequency-aware editing, where only high-frequency components were mixed, preserving the low-frequency structure of the original image. Column 7 is for frequency-agnostic editing, where all-frequency features of reference images are mixed with all frequency bands of the original image at a mixing ratio of {\underline{90\%}}. We exhibit the results from both datasets. }
    \label{fig:visual_imageediting01}
\end{figure}
We conducted an analysis of $\boldsymbol{A}$ to further investigate how individual spectral components influence the generative trajectory. We first tracked the contribution of each Koopman spectrum (cumulative effect, namely CE) to the reconstruction over time. 
Then, at each resolution level $l$, we sorted the eigenvalue pairs across spatial locations by the real parts and applied spectral masking by retaining only those within a target range (smallest, intermediate, or largest), zeroing out the rest. The masked representations are decoded to visualize the contribution of selected spectral modes.

Fig.~\ref{fig:visual_spectrum} summarizes the results of the two spectral analyses: the left shows the contribution of grouped Koopman modes to reconstruction over time, while the right links spectral ranges to corresponding image structures. By selectively activating spectral bands and decoding the latent representations, we observe a clear semantic hierarchy: low-range modes capture global structure, mid-range modes recover overall shape and pose, and high-range modes refine local details. 
These results demonstrate the semantic interpretability of Koopman spectra and their potential for controllable image synthesis. More Koopman spectral analysis results are provided in App. \ref{app:addexp4.2}.

\subsection{One-Step Image Editing: A Case of Model Interpretability}
\label{subsec:Controllable One-Step Image Editing}
We evaluated the interpretability of our framework through an image editing experiment that targets frequency-specific interventions at the intermediate state of the diffusion trajectory.
We performed editing by injecting high-frequency features from a reference image into the generated image at mixing ratios of 10\%, 20\%, 50\%, 80\%, and 90\%, applied at the middle time step to demonstrate temporal editability. For comparison, frequency-agnostic editing mixes the full-spectrum content of the reference and generated images at the same step.

As shown in Fig. \ref{fig:visual_imageediting01}, increasing the injection ratio of reference image features from 10\% to 90\% gradually reveals more facial details from the reference, demonstrating that our frequency-aware editing establishes meaningful correspondences through interpretable frequency decomposition. In contrast, frequency-agnostic editing disrupts global structures, indicating a lack of disentangled control. These results further validate the interpretability of our framework. 

We further perform an image editing experiment of image recuperation in the impainting and coloring tasks on CIFAR-10 dataset following Algorithm 4 from \cite{song2023consistency}, which iteratively mixes a reference image with the generated image along the generative trajectory by adding and removing noise at each time step $t$. The results are presented in Fig. \ref{appfig:visual_imageedit}.


\subsection{Ablation Study}
\label{subsec:Ablation Study}
\begin{wrapfigure}[11]{r}{0.38\textwidth}
    \small
    \centering
    \captionof{table}{Ablation study on CIFAR-10.}
    \label{tab:ablation_fid}
    \begin{tabular}{cccc}
        \toprule
        \multicolumn{3}{c}{Settings} & \multirow{2}{*}{FID ($\downarrow$)} \\
        \cmidrule(lr){1-3}
        Koop. & $\mathcal{L}_{t\text{-consist}}$ & \multicolumn{1}{c}{hierar.} \\
        \midrule
        \cross & \cross & \check & 5.72 \\
        \check & \cross & \check & 5.57 \\
        \check & \check & \cross & 4.78 \\
        \check & \check & \check & 3.30 \\
        \bottomrule
    \end{tabular}
\end{wrapfigure}

We present an ablation study on the CIFAR-10 dataset to evaluate the contributions of key components in our framework: Koopman evolution (Koop.), the trajectory consistency loss ($\mathcal{L}_{t\text{-consist}}$), and hierarchical design (hierar.). Tab. \ref{tab:ablation_fid} shows that the one-step baseline model removing the Koopman dynamics results in an FID of 5.72. Upon it, additional Koopman evolutions at the skips and bottleneck paths, and introducing trajectory consistency loss further improve the results. In addition, a hierarchical Koopman dynamics design also contributes to better generation quality. 


\begin{figure}[t]
    \centering
\includegraphics[width=\textwidth]{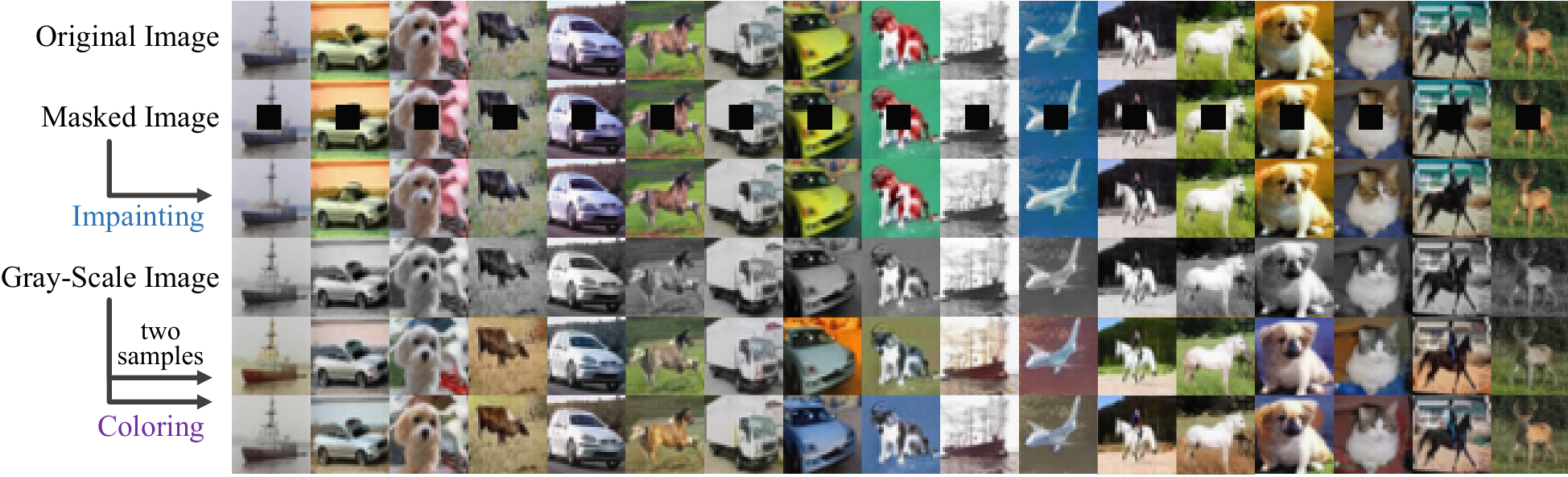}
    \caption{The visualization of image recuperation in the impainting and coloring tasks. The experiment is performed by Algorithm 4 from \cite{song2023consistency}, which iteratively mixes the image with a reference image created by adding and removing noise at time $t$ with $t$ decreasing from $T$ to $0$. }
    \label{appfig:visual_imageedit}
\end{figure}
\section{Related Work}
\paragraph{One-step Generation for Diffusion Models.}
Prior works have distilled pretrained diffusion models into efficient one-step generators. 
Knowledge distillation \cite{luhman2021knowledge} trains a student model to replicate the denoising behavior of the teacher, rectified flows \cite{liu2022rectified} reformulate diffusion as ODEs for distillation, all considered as classical approaches. 
Recent works further improve distillation-based generation, such as DMD \cite{yin2024one}, which aligns one-step generators with diffusion models via KL divergence, and SlimFlow, which \cite{zhu2024slimflow} enhances rectified flows through compression of model size. Consistency models \cite{song2023consistency} take a different route, suffering from instability and inefficiency. Improvements include fine-tuning from diffusion models \cite{geng2024consistency} and adversarial enhancements \cite{golan2024enhancing}. However, these methods prioritize performance at the cost of losing the inherent interpretability and controllability offered by diffusion models.


\paragraph{Koopman Operators.}
Koopman operator theory has enabled advances in system analysis \cite{lusch2018deep}, control \cite{narasingam2023data}, and optimization \cite{redman2021operator} by extracting globally linear structures from nonlinear dynamics. Recent work extends its use to time series modeling \cite{naiman2023generative, zhengkoonpro,baikonode}. In addition, many other works have proposed approximating the Koopman operators using neural networks. For instance, KNF \cite{wang2023koopman} leverages DNNs to learn the linear Koopman space and the coefficients of chosen measurement functions. 
To our knowledge, we are the first to introduce Koopman theory into the field of image generation. See App. \ref{app:related} for more related work.

\section{Discussion}
HKD is an interpretable one-step generation framework that retains access to intermediate states and fine-grained control—while enabling one-step sampling. 
It is the first to introduce interpretability and intermediate-state intervention into one-step generation. By bridging fast sampling and interpretability, HKD opens new possibilities for explainable image synthesis. 
While our current formulation already demonstrates competitive one-step generation performance and enables spectral interventions, several directions remain open for future exploration. First, although we adopt a standard training paradigm without relying on adversarial optimization or heavy tuning, integrating advanced training techniques (e.g., adversarial learning) may further improve generation quality.
In addition, while HKD performs well on standard-resolution datasets, its potential for high-resolution generation, afforded by its hierarchical design, remains underexplored.
Moreover, the explicit spectral decomposition in our framework naturally supports interpretable interventions, making it well-suited for a range of semantic editing tasks. Beyond the frequency-aware manipulations we demonstrate, HKD opens up possibilities for text-guided editing, attribute-specific control. While these directions are currently unexplored, they underscore the broader applicability of our framework beyond pure generation.
Our framework provides a solid foundation for these future advancements.

\bibliographystyle{plain} 
\bibliography{neurips_2025.bib}


\newpage

\appendix

\section*{Appendix Overview}

The following lists the structure of the appendix, with links to the respective sections.

\begin{itemize}
    \item[\ref{app:background}] \hyperref[app:background]{\textbf{Background Knowledge Supplement}}:
    \begin{itemize}
        \item[\ref{app:related}] \hyperref[app:related]{Related Work}
    \end{itemize}
    \begin{itemize}
        \item[\ref{app:Koopman Theory}] \hyperref[app:Koopman Theory]{Koopman Operator: From Infinite-Dimensional to Finite-Dimensional Approximation}
    \end{itemize}
    \item[\ref{app:framework}] \hyperref[app:framework]{\textbf{Framework Supplement}}:
    \begin{itemize}
        \item[\ref{app:Algorithm}] \hyperref[app:Algorithm]{Training Details}
    \end{itemize}
        \begin{itemize}
        \item[\ref{app:Koopman Spectral Analysis}] \hyperref[app:Koopman Spectral Analysis]{Koopman Spectral Analysis}
    \end{itemize}
    \item[\ref{app:theory}] \hyperref[app:theory]{\textbf{Theoretical Supplement}}:
    \begin{itemize}
        \item[\ref{app:Proposition Ablockdiag}] \hyperref[app:Proposition Ablockdiag]{Proposition: Rationality of the Block Diagonalization of {\bf{A}}}
        \item[\ref{app:Proposition equivalence}] \hyperref[app:Proposition equivalence]{Proposition: Image-Space Trajectory Consistency Loss and its Latent-Space Counterpart}
        
        \item[\ref{app:Formal Statement and Proof of Theorem 3.1}] \hyperref[app:Formal Statement and Proof of Theorem 3.1]{Formal Statement and Proof of Theorem 3.1}

    \end{itemize}
    \item[\ref{app:experiments}] \hyperref[app:experiments]{\textbf{Experimental Supplement}}:
    \begin{itemize}
        \item[\ref{app:addexp4.1}] \hyperref[app:addexp4.1]{Additional Experimental Results for Sec. 4.1}
        \begin{itemize}
            \item[\ref{app:Conditional Generation on CIFAR-10}] \hyperref[app:Conditional Generation on CIFAR-10]{ Conditional Generation on CIFAR-10}
            \item[\ref{app:Visual Results}] 
    \hyperref[app:Visual Results]{Visual Results}
        \item[\ref{app:Training Stability Results}] 
    \hyperref[app:Training Stability Results]{Training Stability Results}

    \item[\ref{app:The Wall-clock Time Per Image for Generation Methods}] 
    \hyperref[app:The Wall-clock Time Per Image for Generation Methods]{The Wall-clock Time Per Image for Generation Methods}
        \end{itemize}
        \item[\ref{app:addexp4.2}] \hyperref[app:addexp4.2]{Additional Experimental Results for Sec. 4.2}
        \item[\ref{app:addexp4.3}] \hyperref[app:addexp4.3]{Additional Experimental Results for Sec. 4.3}
        
    \item[\ref{app:expdetails}] \hyperref[app:expdetails]{More Implementation Details about Experiments}
        \item[\ref{app:comdetails}] \hyperref[app:comdetails]{More Details about Comparison Methods}
        
    \end{itemize}
    
\end{itemize}

\begin{itemize}
    \item[\ref{app:broader impact}] \hyperref[app:broader impact]{\textbf{Broader Impact}}
\end{itemize}

\newpage
\section{Background Knowledge Supplement}
\label{app:background}

\subsection{Related Work}
\label{app:related}

\paragraph{One-step Generation for Diffusion Models.}
Several prior works have explored distilling pretrained diffusion models into efficient one-step generators. Knowledge distillation \cite{luhman2021knowledge} trains a student model to mimic the full sampling trajectory of a pretrained diffusion model. Rectified flows \cite{liu2022rectified} reformulate diffusion sampling as a continuous normalizing flow, allowing distillation via ODE simulation. Progressive distillation \cite{salimans2022progressive} gradually reduces the number of inference steps during training, enabling faster sampling with minimal performance loss. Recent works further improve distillation-based generation, such as SiD \cite{zhou2024score}, which reformulates forward diffusion processes as semi-implicit distributions, DMD \cite{yin2024one}, which enforces the one-step image generator to match the diffusion model at the distribution level by minimizing an approximate KL divergence, and SIM \cite{luo2024one}, which computes the gradients for a wide class of score-based divergences between a diffusion model and a generator. In parallel, SlimFlow \cite{zhu2024slimflow} builds on rectified flows by exploring joint compression of inference steps and model size to enhance efficiency. 
Consistency models \cite{song2023consistency} offer an alternative approach, learning to map noisy inputs directly to clean outputs. These models can either be trained end-to-end or used as distillation students. Due to their training instability and inefficiency, various improvements have been proposed, such as \cite{geng2024consistency}, which fine-tunes a consistency model starting from a pretrained diffusion model, \cite{silvestri2025training}, which trains consistency models with variational noise coupling, and \cite{golan2024enhancing} by enhancing consistency models via adversarialy-trained classification and energy-based discrimination. However, most of these works ignore the interpretability of the generation process. 

\paragraph{Koopman Operators.}
Over the past two decades, Koopman operator theory has attracted growing interest, enabling progress in dynamical system analysis \cite{xu2025reinforced,xu2025reskoopnet,xu2025data}, control \cite{narasingam2023data}, optimization \cite{redman2021operator}, and forecasting \cite{liu2023koopa}. These methods utilize Koopman-based representations to extract globally linearizable structures from nonlinear dynamics, improving interpretability and control. 
More recently, Koopman operators have been increasingly applied to time series modeling \cite{naiman2023generative, zhengkoonpro}. In addition, several works have leveraged neural networks to approximate Koopman operators. For example, \cite{aka2024balanced} proposes balanced neural ODEs to approximate Koopman operators without predefined dimensionality, while \cite{wang2022koopman} uses DNNs to learn both the Koopman space and coefficients of selected observables.
In our work, we similarly employ neural networks to learn observable functions for end-to-end training. To the best of our knowledge, we are the first to introduce Koopman theory into image generation.

\subsection{Koopman Operator: From Infinite-Dimensional to Finite-Dimensional Approximation}
\label{app:Koopman Theory}
In theory, the Koopman operator $\mathcal{K}$ is an infinite-dimensional linear operator that governs the evolution of observable functions under a nonlinear dynamical system. That is, for a dynamical map $\Phi : \mathcal{X} \to \mathcal{X}$ and any observable function $g: \mathcal{X} \to \mathbb{R}$, the Koopman operator acts as
\begin{equation}
\textstyle(\mathcal{K}g)(\boldsymbol{x})=g(\Phi(\boldsymbol{x})).
\end{equation}
However, this operator acts on an infinite-dimensional function space, which is not directly tractable in practice.
To enable computation, the Koopman operator is commonly approximated in a finite-dimensional subspace. This is achieved by selecting a finite set of basis functions $\boldsymbol{u} = [u_1, u_2, \ldots, u_m]^\top$ to represent the space of observables. We then approximate the action of $\mathcal{K}$ on this basis as
\begin{equation}
\mathcal{K}\boldsymbol{u}(\boldsymbol{x})\approx\boldsymbol{K}\boldsymbol{u}(\boldsymbol{x}),
\end{equation}
where $\boldsymbol{K} \in \mathbb{R}^{m \times m}$ is a finite-dimensional matrix known as the Koopman matrix.
Given an observable $g$ projected onto the subspace as $\boldsymbol \xi \triangleq [\langle g, u_1\rangle, \langle g, u_2\rangle, \cdots, \langle g, u_m\rangle]^\top$, its evolution under $\Phi$ can be approximated by 
$g(\Phi (\boldsymbol x))\approx \boldsymbol \xi ^\top\boldsymbol K\boldsymbol u(\boldsymbol x)$. Furthermore, if we consider a vector of observables $\boldsymbol{g} \approx \boldsymbol{P} \boldsymbol{u}$ that is linearly related to the basis via some invertible matrix $\boldsymbol{P}$, the Koopman dynamics in this new coordinate system are given by the similarity transform:
\begin{equation}
    \boldsymbol g(\Phi (\boldsymbol x))\approx \boldsymbol P\boldsymbol K\boldsymbol P^{-1}\boldsymbol g(\boldsymbol x).
\end{equation}

Several numerical algorithms have been developed to compute the Koopman matrix $\boldsymbol{K}$ from data:

(1) Dynamic Mode Decomposition (DMD): DMD is a widely used data-driven technique that approximates the Koopman operator using linear regression on snapshots of system states. It assumes a linear relationship between time-shifted observables and solves for the best-fit linear operator $\boldsymbol{K}$ such that $\boldsymbol{x}_{t+1} \approx \boldsymbol{K} \boldsymbol{x}_t$ in the observable space. Variants include Extended DMD (EDMD) and Hankel-DMD for richer function spaces.

(2) Extended Dynamic Mode Decomposition (EDMD): EDMD generalizes DMD by applying the method in a lifted feature space defined by nonlinear basis functions (e.g., polynomials, radial basis functions). This corresponds to choosing a specific $\boldsymbol{u}$ and solving for the best Koopman matrix in this subspace.

(3) Neural Approximations: More recently, neural networks have been used to learn either the Koopman embedding (i.e., the basis functions $\boldsymbol{u}$) or the Neural Approximations use trainable architectures to discover a latent space where linear evolution via a Koopman operator holds approximately. They enable scalable and adaptive modeling of complex, high-dimensional dynamics. In our work, we adopt this method to approximate the Koopman operators.
\section{Framework Supplement}
\label{app:framework}

\subsection{Training Details}
\label{app:Algorithm}

We optimize our model using a total loss that combines the trajectory consistency loss $\mathcal L_{t\text{-consist}}$ that supervises intermediate states along the diffusion trajectory with a reconstruction loss 
\begin{equation*}
\mathcal L_{\text{recon}} = d \Big(\mathcal D_{\boldsymbol{\psi}} (\{\textrm e^{(\epsilon-T)\boldsymbol A^{(l)}}\mathcal E_{\boldsymbol \theta }^{(l)}(\boldsymbol x_{T})\}_{l=1}^L),\boldsymbol{x}_{\epsilon}\Big)
\end{equation*}
from the noise $\boldsymbol{x}_T$ to enforce the accurate one-step mapping from noise to clean image, i.e., 
\begin{equation*}
    \mathcal{L} = L_{t\text{-consist}} + L_{\text{recon}}.
\end{equation*}
Define that $\boldsymbol{x}_{\epsilon}^t \triangleq \mathcal D_{\boldsymbol{\psi}} (\{\textrm e^{(\epsilon-t)\boldsymbol A^{(l)}}\mathcal E_{\boldsymbol \theta }^{(l)}(\boldsymbol x_{t})\}_{l=1}^L)$, then $\mathcal{L}_{\text{total}} = \sum_{t \in \{T\}\cup \mathcal{T}}d(\boldsymbol{x}_{\epsilon}^t,\boldsymbol{x}_{\epsilon})$, where the elements of $\mathcal{T}$ are sampled from $ U[\epsilon,T)$ and $d(\boldsymbol{x},\boldsymbol{y})$ is a metric composed of $l_2$-distance, i.e., MSE loss, and LPIPS loss in our model:
\begin{equation*}
d(\boldsymbol{x},\boldsymbol{y}) = \lambda_1 \mathcal{L}_{\text{MSE}}(\boldsymbol{x},\boldsymbol{y})+ \lambda_2\mathcal{L}_{\text{LPIPS}}(\boldsymbol{x},\boldsymbol{y}).
\end{equation*}
To balance perceptual fidelity and optimization stability, we fix $\lambda_2 = 1$ and apply an annealing schedule to $\lambda_1$, allowing the model to initially focus on coarse alignment before gradually emphasizing perceptual refinement.

For network architecture, we adopt the mature U-Net encoder and decoder backbone used in diffusion models for both the encoder and decoder modules in our formulation to leverage existing diffusion models' design efficacy. To improve training efficiency and reduce mode collapse, we further initialize the encoder and decoder with pre-trained weights from diffusion models, which provide structured latent-to-output mappings, and retain rich hierarchical features acquired during large-scale training. 
The entire model, including $\mathcal{{\mathcal{E}}}_{\boldsymbol{\theta}}$, $\mathcal{D}_{\boldsymbol{\boldsymbol{\phi}}}$, and $\{\boldsymbol{A}^{(l)}\}_{l=1}^L$, is trained end-to-end using the loss $\mathcal{L}_{\text{total}}$.

\subsection{Koopman Spectral Analysis: A New Lens on Interpreting Diffusion Dynamics}
\label{app:Koopman Spectral Analysis}
 Our framework offers a fundamentally different viewpoint on the analysis of the diffusion generation process via Koopman spectral analysis. 
Beyond enabling one-step sampling, the proposed formulation naturally provides a novel dynamical perspective of diffusion-based generative models through the lens of Koopman spectral analysis. By explicitly modeling the latent evolution as a linear dynamical system, our approach allows for the application of spectral tools from dynamical systems theory to reveal the underlying spectral structure of generative dynamics and provide new theoretical tools for analyzing and interpreting dynamical modes during sampling. 

Specifically, each block $\boldsymbol{\Lambda}_k^{(l)}(i,j)$ directly characterizes a local dynamical mode, with eigenvalues $\alpha_k^{(l)}(i,j) \pm i \beta_k^{(l)}(i,j)$ encoding the growth/decay rate and oscillatory frequency of the diffusion trajectories. The eigenvalues of Koopman matrix $\boldsymbol{K}^{(l)}(i,j)$ associated with each mode can be computed as 
\begin{equation}
\textstyle\lambda_k\Big(\boldsymbol{K}^{(l)}(i,j)\Big) = e^{\alpha_k^{(l)}(i,j) \Delta t} \left[ \cos\left( \beta_k^{(l)}(i,j) \Delta t \right) \pm i \sin\left( \beta_k^{(l)}(i,j) \Delta t \right) \right],
\end{equation}
providing a principled spectral description of the generative dynamics. The magnitude $|\lambda_k(\boldsymbol{K}^{(l)}(i,j)|=e^{\alpha_k^{(l)}(i,j) \Delta t}$ determines the stability of each mode, while the imaginary part of $\lambda_k(\boldsymbol{K}^{(l)}(i,j))$ captures the frequency modes of oscillations. Benefiting from the block-diagonal structure of $\boldsymbol{A}^{(l)}(i,j)$, our framework enables direct analysis of its spectral components to interpret the generative behavior of the model by directly observing the values of $\alpha_k^{(l)}(i,j)$ and $\beta_k^{(l)}(i,j)$. Specifically, the high real parts $\alpha_k^{(l)}(i,j)$ and imaginary parts $\beta_k^{(l)}(i,j)$ and low $\alpha_k^{(l)}(i,j)$ and $\beta_k^{(l)}(i,j)$ of each block correspond to low- and high-frequency modes in the generation process, respectively. 

This explicit spectral interpretability further allows for controllable image editing by selectively modifying certain frequency components, enabling targeted modification of structural or fine-grained details in the generated images.

\section{Theoretical Supplement}
\label{app:theory}

\subsection{Proposition: Rationality of the Block Diagonalization of $\boldsymbol{A}$}
\label{app:Proposition Ablockdiag}
We present the following proposition to ensure the rationality of the block diagonalization of the linear operator $\boldsymbol{A}$.

\begin{proposition}
\label{appprop:Ablockdiag}
We define the set for the $l$-layer encoders $S_{\mathcal E}\triangleq\{\mathcal E=\mathcal E_l\circ\cdots\circ \mathcal E_2\circ \mathcal E_1\,|\,\mathcal E_l: \boldsymbol {\mathfrak f}\mapsto \boldsymbol W^{(\mathcal E)}_l\boldsymbol {\mathfrak f} + \boldsymbol b^{(\mathcal E)}_l\}$ and the set for the decoders $S_{\mathcal D}\triangleq\{\mathcal D=\mathcal D_l\circ\cdots\circ \mathcal D_2\circ \mathcal D_1\,|\,\mathcal D_1: \boldsymbol {\mathfrak x}\mapsto \boldsymbol W^{(\mathcal D)}_1\boldsymbol {\mathfrak x} + \boldsymbol b^{(\mathcal D)}_1\}$. Then for $\mathcal E\in S_{\mathcal E}, S_{\mathcal D}\in S_{\mathcal D}$ and the Koopman matrix $\boldsymbol K$, there exists another set of $\tilde{\mathcal E}\in S_{\mathcal E}, \tilde{\mathcal D}\in S_{\mathcal D}$ and $\tilde{\boldsymbol K}\in\mathbb R^{m\times m}$ that $\tilde{\boldsymbol K}=\boldsymbol I+\text{diag}\left\{\boldsymbol \Lambda _1, \boldsymbol \Lambda _2, \cdots, \boldsymbol \Lambda _k\right\}$ and $\boldsymbol \Lambda _i=\begin{bmatrix}\alpha _i&\beta _i\\-\beta _i&\alpha _i\end{bmatrix}$, such that, 
\begin{equation}
\mathcal D\textrm e^{-t(\boldsymbol K-\boldsymbol{I})}\mathcal E \equiv \tilde{\mathcal D}\textrm e^{-t(\tilde{\boldsymbol K}-\boldsymbol{I})}\tilde{\mathcal E}.
\end{equation}
\end{proposition}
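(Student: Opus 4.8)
The plan is to reduce the statement to a single linear-algebra normal-form fact and then push the required change of basis into the outer affine layers of the encoder and decoder, which is exactly what the definitions of $S_{\mathcal E}$ and $S_{\mathcal D}$ are engineered to permit. Write $\boldsymbol M \triangleq \boldsymbol K - \boldsymbol I$. The goal is to produce a real invertible matrix $\boldsymbol P$ and a block matrix $\tilde{\boldsymbol M} = \mathrm{diag}\{\boldsymbol\Lambda_1,\dots,\boldsymbol\Lambda_k\}$, with each $\boldsymbol\Lambda_i$ of the stated rotation--scaling form, such that $\boldsymbol M = \boldsymbol P\tilde{\boldsymbol M}\boldsymbol P^{-1}$; setting $\tilde{\boldsymbol K} = \boldsymbol I + \tilde{\boldsymbol M}$ then gives $\tilde{\boldsymbol K}-\boldsymbol I = \tilde{\boldsymbol M}$ with the desired structure.

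First I would establish the existence of such $\boldsymbol P$ and $\tilde{\boldsymbol M}$ via the real canonical form. Assuming $\boldsymbol M$ is diagonalizable over $\mathbb C$ (the generic case), its spectrum consists of real eigenvalues together with complex-conjugate pairs $\alpha_i \pm i\beta_i$. For each conjugate pair one obtains, in a suitable real basis, a $2\times2$ block $\left[\begin{smallmatrix}\alpha_i & \beta_i\\ -\beta_i&\alpha_i\end{smallmatrix}\right]$; real eigenvalues are absorbed as degenerate blocks with $\beta_i = 0$ (pairing real eigenvalues, or padding the Koopman dimension to the even value $m = 2k$). Collecting the real and imaginary parts of the eigenvectors as the columns of $\boldsymbol P$ yields the block-diagonal $\tilde{\boldsymbol M} = \boldsymbol P^{-1}\boldsymbol M\boldsymbol P$.

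Next I would absorb $\boldsymbol P$ into the networks, using the fact that the two layers \emph{adjacent} to the Koopman evolution are affine by construction. Because the outermost layer $\mathcal E_l$ of every encoder in $S_{\mathcal E}$ is affine, the map $\tilde{\mathcal E} \triangleq (\boldsymbol P^{-1}\,\cdot\,)\circ\mathcal E$ is again a member of $S_{\mathcal E}$: its last layer has weight $\boldsymbol P^{-1}\boldsymbol W_l^{(\mathcal E)}$ and bias $\boldsymbol P^{-1}\boldsymbol b_l^{(\mathcal E)}$. Symmetrically, since the innermost layer $\mathcal D_1$ of every decoder in $S_{\mathcal D}$ is affine, $\tilde{\mathcal D}\triangleq \mathcal D\circ(\boldsymbol P\,\cdot\,)$ lies in $S_{\mathcal D}$ with first-layer weight $\boldsymbol W_1^{(\mathcal D)}\boldsymbol P$ and unchanged bias. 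Invoking the conjugation identity $e^{-t\boldsymbol M}=e^{-t\boldsymbol P\tilde{\boldsymbol M}\boldsymbol P^{-1}}=\boldsymbol P\,e^{-t\tilde{\boldsymbol M}}\,\boldsymbol P^{-1}$ and chaining the three maps gives
\begin{equation*}
\tilde{\mathcal D}\,e^{-t(\tilde{\boldsymbol K}-\boldsymbol I)}\,\tilde{\mathcal E}
= \mathcal D\,\boldsymbol P\,e^{-t\tilde{\boldsymbol M}}\,\boldsymbol P^{-1}\mathcal E
= \mathcal D\,e^{-t\boldsymbol M}\,\mathcal E
= \mathcal D\,e^{-t(\boldsymbol K-\boldsymbol I)}\,\mathcal E,
\end{equation*}
which holds for every $t$ and completes the argument.

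The main obstacle is the first step: guaranteeing the exact $2\times2$ rotation--scaling structure for an arbitrary $\boldsymbol K$. If $\boldsymbol M$ fails to be diagonalizable, its real Jordan form contains nontrivial Jordan blocks that are not of the form $\left[\begin{smallmatrix}\alpha&\beta\\-\beta&\alpha\end{smallmatrix}\right]$, and distinct real eigenvalues cannot be merged into a single such block. Handling these cases requires either restricting to the (dense, generic) diagonalizable regime, invoking a density/continuity argument to approximate $\boldsymbol M$ by diagonalizable matrices, or enlarging the Koopman dimension so that real modes can be paired with $\beta_i=0$. Once this normal-form existence is secured, the absorption into the affine layers and the exponential conjugation identity are entirely routine.
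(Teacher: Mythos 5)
Your proposal is correct and follows essentially the same route as the paper's proof: assume a complex eigendecomposition of $\boldsymbol K$, pass to the real block form with $2\times2$ rotation--scaling blocks, absorb the change of basis into the affine outer layer of the encoder and inner layer of the decoder, and conclude via the exponential conjugation identity. The one caveat you flag (handling real eigenvalues) is resolved in the paper exactly by the option you mention, namely duplicating each real eigenvector into a degenerate $2\times2$ block with $\beta_i=0$, at the cost that $\tilde{\boldsymbol P}\tilde{\boldsymbol Q}$ is no longer the identity; the paper likewise assumes diagonalizability without addressing the Jordan-block case.
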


\begin{proof}
    We assume that the eigenvalue decomposition of matrix $\boldsymbol K$ in the complex field is $\boldsymbol K = \boldsymbol P\boldsymbol \Lambda \boldsymbol P^{-1}$ where $\boldsymbol \Lambda \triangleq\text{diag}\{\lambda _1, \bar\lambda _1 \cdots, \lambda _r, \bar\lambda _r, \lambda _{r+1}, \lambda _{r+2}, \cdots, \lambda _{m-r}\}$ with $\lambda _i, \bar\lambda _i, 1\le i\le r$ the conjugated imaginary eigenvalues and $\lambda _i, r+1\le i\le m-r$ the real eigenvalues. 

Let $\boldsymbol \Lambda _i=\begin{bmatrix}\text{Re}~\lambda _i-1&\text{Im}~\lambda _i\\-\text{Im}~\lambda _i&\text{Re}~\lambda _i-1\end{bmatrix}, 1\le i\le r$ and $\boldsymbol \Lambda _i=\begin{bmatrix}\lambda _i-1&0\\0&\lambda _i-1\end{bmatrix}, r+1\le i\le m-r$, we use them to define $\tilde{\boldsymbol K}$. Furthermore, let $\tilde{\boldsymbol P}\triangleq[\text{Re }p_1, \text{Im } p_1, \cdots, \text{Re }p_r, \text{Im }p_r, \frac{p_{r+1}}{2}, \frac{p_{r+1}}{2}, \cdots, \frac{p_{m-r}}{2}, \frac{p_{m-r}}{2}]$ where $\boldsymbol P=[p_1, \bar p_1, \cdots, p_r, \bar p_r, p_{r+1}, p_{r+2}, \cdots, p_{m-r}]$ and $\tilde{\boldsymbol Q}=\sqrt2~[\text{Re }p_1^\dagger , -\text{Im } p_1^\dagger , \cdots, \text{Re }p_r^\dagger , -\text{Im }p_r^\dagger , \frac{p_{r+1}^\dagger}2, \frac{p_{r+1}^\dagger}2 , \cdots, \frac{p_{m-r}^\dagger }2, \frac{p_{m-r}^\dagger }2]^\top$ where $\boldsymbol P^{-1}=[p_1^\dagger , \bar p_1^\dagger , \cdots, p_r^\dagger , \bar p_r^\dagger , p_{r+1}^\dagger , p_{r+2}^\dagger , \cdots, p_{m-r}^\dagger ]^\top$. Note that $p_i^\dagger $'s are row vectors and $\tilde{\boldsymbol P}\tilde{\boldsymbol Q}$ is NOT necessarily identity. 

Last but not least, let $\tilde{\boldsymbol W}_l^{(\mathcal E)}\triangleq\tilde{\boldsymbol Q}\boldsymbol W_l^{(\mathcal E)}$ and $\tilde{\boldsymbol b}_l^{(\mathcal E)}\triangleq\tilde{\boldsymbol Q}\boldsymbol b_l^{(\mathcal E)}$ in $\tilde{\mathcal E}$ and $\tilde{\boldsymbol W}_1^{(\mathcal D)}\triangleq\boldsymbol W_1^{(\mathcal D)}\tilde{\boldsymbol P}$ in $\tilde{\mathcal D}$ then
$$
\begin{aligned}
&\tilde{\mathcal D}\textrm e^{-t(\tilde{\boldsymbol K}-\boldsymbol{I})}\tilde{\mathcal E}(\boldsymbol x), \\
=& \tilde{\mathcal D}_l\circ\cdots\circ\tilde{\mathcal D}_2(\tilde{\boldsymbol W}_1^{(\mathcal D)}\textrm e^{-t(\tilde{\boldsymbol K}-\boldsymbol{I})}(\tilde{\boldsymbol W}_l^{(\mathcal E)}\tilde{\mathcal E}_{l-1}\circ\cdots\circ\tilde{\mathcal E}_1(\boldsymbol x) + \tilde{\boldsymbol b}_l^{(\mathcal E)}) + \tilde{\boldsymbol b}_1^{(\mathcal D)}), \\
=& \mathcal D_l\circ\cdots\circ\mathcal D_2(\boldsymbol W_1^{(\mathcal D)}\tilde{\boldsymbol P}\textrm e^{-t(\tilde{\boldsymbol K}-\boldsymbol{I})}\tilde{\boldsymbol Q}({\boldsymbol W}_l^{(\mathcal E)}\mathcal E_{l-1}\circ\cdots\circ\mathcal E_1(\boldsymbol x) + {\boldsymbol b}_l^{(\mathcal E)}) + {\boldsymbol b}_1^{(\mathcal D)}), \\
=& \mathcal D\left[\tilde{\boldsymbol P}\textrm e^{-t(\tilde{\boldsymbol K}-\boldsymbol{I})}\tilde{\boldsymbol Q}~\mathcal E(\boldsymbol x)\right], \\
\overset\star=& \mathcal D\textrm e^{-t(\boldsymbol K-\boldsymbol{I})}\mathcal E(\boldsymbol x), 
\end{aligned}
$$
where $\overset\star=$ holds due to $\begin{bmatrix}\frac{p_i}{\sqrt2}&\frac{p_i}{\sqrt2}\end{bmatrix}\textrm e^{-t\boldsymbol \Lambda _i}\begin{bmatrix}{p_i^\dagger }/\sqrt2\\{p_i^\dagger }/\sqrt2\end{bmatrix} = \textrm e^{-t(\lambda _i-1)}p_ip_i^\dagger , r+1\le i\le m-r$ and 

\begin{align}
&\begin{bmatrix}\sqrt2\text{Re }p_i&\sqrt2\text{Im }p_i\end{bmatrix}\textrm e^{-t\boldsymbol \Lambda _i}\begin{bmatrix}\sqrt2\text{Re }p_i^\dagger \\-\sqrt2\text{Im }p_i^\dagger \end{bmatrix}, \\
\overset{\lambda _i=\alpha _i+\textrm i\beta _i}{=\!=\!=\!=\!=\!=} &2\begin{bmatrix}\text{Re }p_i&\text{Im }p_i\end{bmatrix}\begin{bmatrix}\textrm e^{-t(\alpha _i-1)}\cos(-t\beta _i)&\textrm e^{-t(\alpha _i-1)}\sin(-t\beta _i)\\-\textrm e^{-t(\alpha _i-1)}\sin(-t\beta _i)&\textrm e^{-t(\alpha _i-1)}\cos(-t\beta _i)\end{bmatrix}\begin{bmatrix}\text{Re }p_i^\dagger \\-\text{Im }p_i^\dagger \end{bmatrix}, \\
=& 2\textrm e^{-t(\alpha _i-1)}\begin{bmatrix}\text{Re }p_i&\text{Im }p_i\end{bmatrix}\begin{bmatrix}\cos(t\beta _i)&-\sin(t\beta _i)\\\sin(t\beta _i)&\cos(t\beta _i)\end{bmatrix}\begin{bmatrix}\text{Re }p_i^\dagger \\-\text{Im }p_i^\dagger \end{bmatrix}, \\
=& \textrm e^{-t(\alpha _i-1)}\begin{bmatrix}\text{Re }p_i&\text{Im }p_i\end{bmatrix}\begin{bmatrix}1&1\\\textrm i&-\textrm i\end{bmatrix}\begin{bmatrix}\textrm e^{-t\beta _i\textrm i}&0\\0&\textrm e^{t\beta _i\textrm i}\end{bmatrix}\begin{bmatrix}1&-\textrm i\\1&\textrm i\end{bmatrix}\begin{bmatrix}\text{Re }p_i^\dagger \\-\text{Im }p_i^\dagger \end{bmatrix}, \\
=& \textrm e^{-t(\alpha _i-1)}\begin{bmatrix}p_i&\bar p_i\end{bmatrix}\begin{bmatrix}\textrm e^{-t\beta _i\textrm i}&0\\0&\textrm e^{t\beta _i\textrm i}\end{bmatrix}\begin{bmatrix}p_i^\dagger \\\bar p_i^\dagger \end{bmatrix} = \begin{bmatrix}p_i&\bar p_i\end{bmatrix}\textrm e^{-t\left(\begin{bmatrix}\lambda _i&0\\0&\bar\lambda _i\end{bmatrix}-\boldsymbol{I}\right)}\begin{bmatrix}p_i^\dagger \\\bar p_i^\dagger \end{bmatrix},
\end{align}
for $1\le i\le r$. 
\end{proof}

\subsection{Proposition: Image-Space Trajectory Consistency Loss and its Latent-Space Counterpart}
\label{app:Proposition equivalence}
In this part, we first propose the informal version of the equivalence between image-space trajectory consistency loss and its latent-space counterpart under specific assumptions. Then we present the formal version.
\begin{proposition}[Equivalence of Trajectory Consistency Losses under Structural Assumptions, Informal]
\label{appprop:equivalence} 
Assume that the decoder $\mathcal H=\mathcal F\circ\mathcal D_{\boldsymbol \phi }, \forall \mathcal F$ is differentiable with respect to each Koopman state $\boldsymbol w^{(l)}$, and satisfies isotropy and level disentanglement properties with respect to the Koopman decomposition. Then, the image-space trajectory consistency loss:
\begin{align}
\mathcal L_{t-consist} = \mathbb E_{t\sim\mathcal U[\epsilon , T]}\left[d\left(\mathcal D_{\boldsymbol \phi }(\{\textrm e^{(\varepsilon -t)\boldsymbol A^{(l)}}\mathcal E_{\boldsymbol \theta }^{(l)}(\boldsymbol x_t)\}_{l=1}^L), \boldsymbol x_\epsilon \right)\right],
\end{align}
is approximately equivalent to the following latent-space loss:
\begin{align}
\mathcal L_{t-consist}\approx\mathbb E_{t\sim\mathcal U[\epsilon, T)}\sum_{l=1}^L\left\Vert \frac{\partial \mathcal F\circ \mathcal D}{\partial \boldsymbol w^{(l)}}\right\Vert \cdot \Big\Vert \textrm e^{-t\boldsymbol A^{(l)}}\Big\Vert \cdot \left\Vert \mathcal E_{\boldsymbol \theta }^{(l)}(\boldsymbol x_t), \textrm e^{(t-T)\boldsymbol A^{(l)}}\boldsymbol w_T^{(l)}\right\Vert _2^2,
\end{align}
where $d(\cdot , \cdot )$ denotes suitable distance metric such as squared $L _2$-norm or perceptual distances like LPIPS with a corresponding feature extractor $\mathcal F$, i.e., $d(\boldsymbol x, \boldsymbol y)\triangleq\Vert \mathcal F(\boldsymbol x) - \mathcal F(\boldsymbol y)\Vert _2^2$. The approximation holds up to small residual terms governed by the assumptions as detailed in the derivation. 
\end{proposition}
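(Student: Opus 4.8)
The plan is to recognize that both the prediction and the ground truth inside $\mathcal{L}_{t\text{-consist}}$ are images produced by the \emph{same} decoder $\mathcal{D}_{\boldsymbol\phi}$ applied to two nearby points of the product latent space $\prod_{l=1}^L \mathbb{R}^{d_l\times h_l\times w_l}$, and then to linearize the composite map $\mathcal{H}\triangleq\mathcal{F}\circ\mathcal{D}_{\boldsymbol\phi}$ about one of those points. First I would introduce the reference (noise-initialized) trajectory: writing $\boldsymbol w_T^{(l)}\triangleq \mathcal E_{\boldsymbol\theta}^{(l)}(\boldsymbol x_T)$, I identify the ground-truth endpoint with $\boldsymbol x_\epsilon = \mathcal D_{\boldsymbol\phi}(\{\textrm e^{(\epsilon-T)\boldsymbol A^{(l)}}\boldsymbol w_T^{(l)}\}_{l=1}^L)$, which holds exactly at the minimizer of the reconstruction loss $\mathcal L_{\text{recon}}$ and approximately in its neighborhood. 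With this identification the integrand of $\mathcal L_{t\text{-consist}}$ becomes $\|\mathcal H(\{\boldsymbol u^{(l)}\}) - \mathcal H(\{\boldsymbol v^{(l)}\})\|_2^2$, where $\boldsymbol u^{(l)} = \textrm e^{(\epsilon-t)\boldsymbol A^{(l)}}\mathcal E_{\boldsymbol\theta}^{(l)}(\boldsymbol x_t)$ is the prediction latent and $\boldsymbol v^{(l)} = \textrm e^{(\epsilon-T)\boldsymbol A^{(l)}}\boldsymbol w_T^{(l)}$ is the reference latent at level $l$.

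Next I would compute the per-level displacement and factor out the shared linear flow. A direct calculation gives
\begin{equation*}
\boldsymbol\Delta^{(l)} \triangleq \boldsymbol u^{(l)} - \boldsymbol v^{(l)} = \textrm e^{(\epsilon-t)\boldsymbol A^{(l)}}\Big(\mathcal E_{\boldsymbol\theta}^{(l)}(\boldsymbol x_t) - \textrm e^{(t-T)\boldsymbol A^{(l)}}\boldsymbol w_T^{(l)}\Big),
\end{equation*}
so the latent residual $\mathcal E_{\boldsymbol\theta}^{(l)}(\boldsymbol x_t) - \textrm e^{(t-T)\boldsymbol A^{(l)}}\boldsymbol w_T^{(l)}$ appearing in the claimed latent loss is precisely $\boldsymbol\Delta^{(l)}$ transported back along the flow. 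Invoking the differentiability hypothesis on $\mathcal H$, a first-order Taylor expansion about $\{\boldsymbol v^{(l)}\}$ (valid because $\boldsymbol\Delta^{(l)}$ is small near convergence) gives $\mathcal H(\{\boldsymbol u^{(l)}\}) - \mathcal H(\{\boldsymbol v^{(l)}\}) = \sum_{l=1}^L \frac{\partial\mathcal H}{\partial\boldsymbol w^{(l)}}\boldsymbol\Delta^{(l)} + o(\|\boldsymbol\Delta\|)$, whose squared norm expands into diagonal terms $\|\frac{\partial\mathcal H}{\partial\boldsymbol w^{(l)}}\boldsymbol\Delta^{(l)}\|_2^2$ plus cross terms indexed by $l\neq l'$.

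I would then use the two structural assumptions to reduce this quadratic form to the stated product of norms. Level disentanglement ensures the Jacobian of $\mathcal H$ is effectively block-diagonal across resolution levels, so the inner products $\langle \frac{\partial\mathcal H}{\partial\boldsymbol w^{(l)}}\boldsymbol\Delta^{(l)}, \frac{\partial\mathcal H}{\partial\boldsymbol w^{(l')}}\boldsymbol\Delta^{(l')}\rangle$ vanish for $l\neq l'$, collapsing the expansion to $\sum_l \|\frac{\partial\mathcal H}{\partial\boldsymbol w^{(l)}}\boldsymbol\Delta^{(l)}\|_2^2$. Isotropy lets me treat each block as a scaled isometry, so $\|\frac{\partial\mathcal H}{\partial\boldsymbol w^{(l)}}\boldsymbol\Delta^{(l)}\|_2 \approx \|\frac{\partial\mathcal H}{\partial\boldsymbol w^{(l)}}\|\,\|\boldsymbol\Delta^{(l)}\|_2$; combining this with submultiplicativity $\|\boldsymbol\Delta^{(l)}\|_2 \le \|\textrm e^{(\epsilon-t)\boldsymbol A^{(l)}}\|\,\|\mathcal E_{\boldsymbol\theta}^{(l)}(\boldsymbol x_t) - \textrm e^{(t-T)\boldsymbol A^{(l)}}\boldsymbol w_T^{(l)}\|_2$ and the small-$\epsilon$ replacement $\textrm e^{(\epsilon-t)\boldsymbol A^{(l)}}\approx \textrm e^{-t\boldsymbol A^{(l)}}$ reproduces the displayed latent-space expression, with the weighting factors given by the respective operator norms and up to the announced residual terms.

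The main obstacle I anticipate is making disentanglement and isotropy do exactly the right work on the quadratic form: dropping the cross-level terms is legitimate only if the level-wise Jacobian blocks have (near-)orthogonal ranges, and replacing the operator action by the scalar factor $\|\partial\mathcal H/\partial\boldsymbol w^{(l)}\|$ requires that $\boldsymbol\Delta^{(l)}$ not be concentrated on the small singular directions of a block — this is precisely what isotropy must be formalized to rule out. Controlling the second-order Taylor remainder, and the error from identifying $\boldsymbol x_\epsilon$ with the decoded reference latents when $\mathcal L_{\text{recon}}>0$, is the secondary technical point; both feed into the ``small residual terms'' caveat, and I would bound them respectively via the Hessian operator norm of $\mathcal H$ and the reconstruction error.
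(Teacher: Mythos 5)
Your proposal is correct and uses the same three ingredients in the same roles as the paper's proof (per-level decomposition of the image-space difference, disentanglement to kill cross-level terms, isotropy to replace the Jacobian action by a scalar norm factor, and the spectrum-near-identity assumption to absorb the $\mathrm{e}^{(\epsilon-t)\boldsymbol A^{(l)}}$ factor), but the decomposition step is carried out differently. The paper telescopes through a chain of hybrid latent configurations $\boldsymbol w_l = \{\mathrm{e}^{-t\boldsymbol A^{(k)}}\mathcal E_{\boldsymbol\theta}^{(k)}(\boldsymbol x_t)\}_{k\le l}\cup\{\boldsymbol w_0^{(k)}\}_{k>l}$, writing $\mathcal H(\boldsymbol w_L)-\mathcal H(\boldsymbol w_0)=\sum_l[\mathcal H(\boldsymbol w_l)-\mathcal H(\boldsymbol w_{l-1})]$ exactly and only then bounding the cross terms of the squared sum via a correlation coefficient $\rho$ and linearizing each single-level finite difference via isotropy; you instead perform a joint first-order Taylor expansion of $\mathcal H$ about the reference latents and drop cross terms by (near-)orthogonality of the Jacobian blocks. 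The telescoping route keeps the decomposition exact and concentrates all approximation error in the correlation and isotropy steps, which is what lets the formal version (Prop.~C.3) become a clean one-sided bound with the factor $(1-\rho^2L/4)$ and the minimal spectrum $\eta^{(l)}$; your Taylor route additionally requires controlling a second-order remainder via the Hessian of $\mathcal H$, which you correctly flag. One point where your write-up is actually more careful than the paper: you explicitly justify identifying $\boldsymbol x_\epsilon$ with $\mathcal D_{\boldsymbol\phi}(\{\mathrm{e}^{(\epsilon-T)\boldsymbol A^{(l)}}\boldsymbol w_T^{(l)}\})$ through the reconstruction loss, whereas the paper silently substitutes $\mathcal F\circ\mathcal D[\{\boldsymbol w_0^{(l)}\}]$ for $\mathcal F(\boldsymbol x_\epsilon)$ without comment. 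Your closing caveat that isotropy must be formalized so that $\boldsymbol\Delta^{(l)}$ is not concentrated on small singular directions of the Jacobian block is exactly the role played by $\eta^{(l)}$ in the paper's formal statement.
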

\begin{proof}
Our proof relies on three key assumptions. The assumptions are (1) the level-based Koopman states $\boldsymbol w_t^{(l)}$ are disentangled across levels labeled by $l$, (2) the mapping $\mathcal H$ is isotropic w.r.t. the Koopman states, and (3) the spectrums of the Koopman observables are close to $1$. 

Intuitively, it requires (1) Koopman states in different levels depict different features, (2) each Koopman state has the same unit and contributes equally to the loss function, and (3) the Koopman space sufficiently models the latent evolution. A detailed version of the derivation, with the divergence from the assumptions included, is
\begin{align}
&\mathbb E_{t\sim\mathcal U[\varepsilon , T), \boldsymbol x_t\sim p_\text{data}}\left\Vert \mathcal F\circ\mathcal D\left[\{\textrm e^{-t\boldsymbol A^{(l)}}\mathcal E_{\boldsymbol \theta}^{(l)}(\boldsymbol x_t)\}_{l=1}^L\right]-\mathcal F\circ\mathcal D\left[\{\boldsymbol w_0^{(l)}\}_{l=1}^L\right]\right\Vert _2^2, \\
\overset{(*)}=&\sum_{l=1}^L\mathbb E\Big\Vert \mathcal H(\boldsymbol w_l) - \mathcal H(\boldsymbol w_{l-1})\Big\Vert _2^2+o(\rho ), \\
\overset{(**)}=&\sum_{l=1}^L\mathbb E~\left\Vert\frac{\partial \mathcal H}{\partial \boldsymbol w^{(l)}}\right\Vert\cdot \Big\Vert \textrm e^{-t\boldsymbol A^{(l)}}\mathcal E_{\boldsymbol \theta}^{(l)}(\boldsymbol x_t) - \boldsymbol w_0^{(l)}\Big\Vert _2^2+o(\rho +\varepsilon ), \\
\overset{(***)}=&\sum_{l=1}^L\mathbb E~\left\Vert\frac{\partial \mathcal H}{\partial \boldsymbol w^{(l)}}\right\Vert\cdot \Big\Vert \mathcal E_{\boldsymbol \theta}^{(l)}(\boldsymbol x_t) - \boldsymbol w_t^{(l)}\Big\Vert _2^2 +o(\rho +\varepsilon +\delta ),
\end{align}
where parameters $\boldsymbol w_l \triangleq\{\textrm e^{-t\boldsymbol A^{(l)}}\mathcal E_{\boldsymbol \theta}^{(l)}(\boldsymbol x_t)\}_{k=1}^l\cup\{\boldsymbol w_0^{(l)}\}_{k=l+1}^L$, and the equations $\overset{(*)}=$, $\overset{(**)}=$, and $\overset{(***)}=$ correspond to the assumptions above respectively. The error term consists of three small-value terms where $\rho$ is the extent of entanglement among levels, $\varepsilon$ is the amount of anisotropy of $\mathcal H$, and $\delta$ is the range of Koopman spectrum. 

\end{proof}

\begin{proposition}[Equivalence of Trajectory Consistency Losses under Structural Assumptions, Formal]
\label{appprop:equivalence2} 
Under the assumptions that (1) the level-based Koopman states $\boldsymbol w_t^{(l)}$ are strongly disentangled across levels with an upper bound of correlations $\rho $, (2) the mapping $\mathcal H$ is isotropic w.r.t. the Koopman states, with a lower bound of $\eta ^{(l)}$ with is non-zero weights for sufficient models, and (3) the spectrums of the Koopman observables are close to $1$, which leads to the fact that the maximal spectrum $\alpha $ of matrix $\boldsymbol A$ has a small positive value. Following the notations $\boldsymbol w^{(l)}$, $\mathcal H$, $\mathcal F$, and $d$ in Proposition C.2., we have
\begin{align}
\mathcal L_{t-consist} =& \mathbb E_{t\sim\mathcal U[\epsilon , T]}\left[d\left(\mathcal D_{\boldsymbol \phi }(\{\textrm e^{(\varepsilon -t)\boldsymbol A^{(l)}}\mathcal E_{\boldsymbol \theta }^{(l)}(\boldsymbol x_t)\}_{l=1}^L), \boldsymbol x_\epsilon \right)\right], \\
\ge& \left(1-\frac{\rho ^2L}{4}\right)\cdot \textrm e^{-t\alpha }\cdot \sum_{l=1}^L\eta^{(l)}\cdot \mathbb E\Vert \mathcal E_{\boldsymbol \theta }^{(l)}(\boldsymbol x_t) - \textrm e^{(t-T)\boldsymbol A^{(l)}}\boldsymbol w_T^{(l)}\Vert _2^2, 
\end{align}
which showcases that the minimization of $\mathcal L_{t-consist}$ enforces the minimization of estimation error in the Koopman space. 
\end{proposition}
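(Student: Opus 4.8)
The plan is to establish the inequality by chaining three one-sided estimates, one per structural assumption, starting from the feature-space form of the loss. First I would rewrite $\mathcal{L}_{t\text{-consist}}$ as a squared feature distance: since $d(\boldsymbol x,\boldsymbol y)=\|\mathcal F(\boldsymbol x)-\mathcal F(\boldsymbol y)\|_2^2$ and the ground-truth image $\boldsymbol x_\epsilon$ is itself the decoding of the true latent tuple $\{\boldsymbol w_\epsilon^{(l)}\}_{l=1}^L$ with $\boldsymbol w_\epsilon^{(l)}=\textrm e^{(\epsilon-t)\boldsymbol A^{(l)}}\boldsymbol w_t^{(l)}$, the loss becomes $\mathbb E\,\|\mathcal H(\{\textrm e^{(\epsilon-t)\boldsymbol A^{(l)}}\mathcal E_{\boldsymbol\theta}^{(l)}(\boldsymbol x_t)\}_l)-\mathcal H(\{\boldsymbol w_\epsilon^{(l)}\}_l)\|_2^2$ with $\mathcal H=\mathcal F\circ\mathcal D_{\boldsymbol\phi}$. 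I would then introduce a telescoping family of hybrid tuples $\boldsymbol w_l$ in which the first $l$ coordinates carry the evolved encoder states and the remaining carry the true states, and write the argument difference as $\sum_{l=1}^L\boldsymbol\Delta_l$, where $\boldsymbol\Delta_l\triangleq\mathcal H(\boldsymbol w_l)-\mathcal H(\boldsymbol w_{l-1})$ alters only level $l$.

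The disentanglement factor comes next. Expanding $\|\sum_l\boldsymbol\Delta_l\|_2^2=\sum_l\|\boldsymbol\Delta_l\|_2^2+\sum_{l\ne l'}\langle\boldsymbol\Delta_l,\boldsymbol\Delta_{l'}\rangle$, I would invoke assumption (1), which bounds the normalized cross-correlations of level-specific increments by $\rho$, to control the off-diagonal Gram entries. Treating the normalized increments as a near-identity Gram matrix $\boldsymbol G$ with unit diagonal and off-diagonal magnitude at most $\rho$, a lower bound on $\lambda_{\min}(\boldsymbol G)$ yields $\|\sum_l\boldsymbol\Delta_l\|_2^2\ge(1-\rho^2L/4)\sum_l\|\boldsymbol\Delta_l\|_2^2$. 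Pinning down the precise constant $\rho^2L/4$ is the one genuinely delicate spectral computation, and I expect to obtain it from a completion-of-squares / second-order estimate of the cross terms rather than from the crude Gershgorin bound $1-\rho(L-1)$, which only gives a linear-in-$\rho$ factor.

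For each increment I would then pass to the Koopman space. Since $\boldsymbol\Delta_l$ perturbs only the $l$-th argument, expressing it through the fundamental theorem of calculus along the segment joining $\boldsymbol w_{l-1}$ and $\boldsymbol w_l$ and using the isotropy lower bound of assumption (2) (the smallest singular value of $\partial\mathcal H/\partial\boldsymbol w^{(l)}$ is at least $\sqrt{\eta^{(l)}}$ uniformly along the path) gives $\|\boldsymbol\Delta_l\|_2^2\ge\eta^{(l)}\,\|\textrm e^{(\epsilon-t)\boldsymbol A^{(l)}}(\mathcal E_{\boldsymbol\theta}^{(l)}(\boldsymbol x_t)-\boldsymbol w_t^{(l)})\|_2^2$. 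Finally, assumption (3) bounds the maximal real part of the spectrum of $\boldsymbol A^{(l)}$ by $\alpha$, so the smallest singular value of the contraction $\textrm e^{(\epsilon-t)\boldsymbol A^{(l)}}$ (with $\epsilon-t\le0$) is at least $\textrm e^{-t\alpha}$, whence $\|\textrm e^{(\epsilon-t)\boldsymbol A^{(l)}}\boldsymbol v\|_2^2\ge\textrm e^{-t\alpha}\|\boldsymbol v\|_2^2$. Substituting $\boldsymbol w_t^{(l)}=\textrm e^{(t-T)\boldsymbol A^{(l)}}\boldsymbol w_T^{(l)}$, composing the three bounds, and taking the expectation produces exactly the claimed inequality.

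The main obstacle is the lower bound on the finite increments $\|\boldsymbol\Delta_l\|_2$ in the isotropy step: the mean value theorem fails for vector-valued maps, so a uniform smallest-singular-value bound on the Jacobian does not by itself rule out cancellation across the integration path. Making this rigorous requires either a curvature/convexity control ensuring the Jacobian does not rotate enough to cancel the integral, or a genuinely local (infinitesimal-$t$) reading of isotropy; this is also where the $o(\cdot)$ residuals of the informal Proposition are absorbed. A secondary difficulty is coupling this step with the contraction estimate, since the isotropy bound and the spectral bound must be read in compatible coordinates for the singular-value inequalities to compose cleanly.
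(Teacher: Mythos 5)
Your proposal follows essentially the same route as the paper's proof: the same telescoping decomposition through hybrid tuples $\boldsymbol w_l$, the same expansion of the squared sum with the correlation bound $\rho$ absorbing the cross terms into the factor $\left(1-\frac{\rho^2L}{4}\right)$, and the same two subsequent lower bounds from isotropy and the spectral radius of $\boldsymbol A^{(l)}$. The one difficulty you flag in the isotropy step (the mean value theorem failing for vector-valued maps) is sidestepped in the paper by defining $\eta^{(l)}\triangleq\inf_{\Delta\boldsymbol w}\Vert\mathcal H(\boldsymbol w^{(l)}+\Delta\boldsymbol w)-\mathcal H(\boldsymbol w^{(l)})\Vert_2^2/\Vert\Delta\boldsymbol w\Vert_2^2$ directly as a finite-increment infimum rather than as a Jacobian singular value, which makes that step hold by definition.
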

\begin{proof}
Under the given assumptions, we have
\begin{align}
&\mathbb E_{t\sim\mathcal U[\varepsilon , T), \boldsymbol x_t\sim p_\text{data}}\left\Vert \mathcal F\circ\mathcal D[\boldsymbol w_L]-\mathcal F\circ\mathcal D[\boldsymbol w_0]\right\Vert _2^2 = \mathbb E\left\Vert \sum_{l=1}^L\left[\mathcal H(\boldsymbol w_l)-\mathcal H(\boldsymbol w_{l-1})\right]\right\Vert _2^2, \\
=& \sum_{l=1}^L\mathbb E\Vert \mathcal H(\boldsymbol w_l) - \mathcal H(\boldsymbol w_{l-1})\Vert _2^2 +\sum_{\substack{l, k=1\\l\ne k}}^L\mathbb E[\mathcal H(\boldsymbol w_l) - \mathcal H(\boldsymbol w_{l-1})]^\top[\mathcal H(\boldsymbol w_k) - \mathcal H(\boldsymbol w_{k-1})], \\
\overset{(*)}\ge&\left(1-\frac{\rho^2L}{4}\right)\sum_{l=1}^L\mathbb E\Vert \mathcal H(\boldsymbol w_l) - \mathcal H(\boldsymbol w_{l-1})\Vert _2^2, \\
\overset{(**)}\ge& \left(1-\frac{\rho ^2L}{4}\right)\sum_{l=1}^L\eta ^{(l)}\cdot \mathbb E\Vert \textrm e^{-t\boldsymbol A^{(l)}}\mathcal E_{\boldsymbol \theta }^{(l)}(\boldsymbol x_t) - \boldsymbol w_0^{(l)}\Vert _2^2, \\
\overset{(***)}\ge&\left(1-\frac{\rho ^2L}{4}\right)\cdot \textrm e^{-t\alpha }\cdot \sum_{l=1}^L\eta^{(l)}\cdot \mathbb E\Vert \mathcal E_{\boldsymbol \theta }^{(l)}(\boldsymbol x_t) - \textrm e^{(t-T)\boldsymbol A^{(l)}}\boldsymbol w_T^{(l)}\Vert _2^2,
\end{align}
where parameters $\boldsymbol w_l \triangleq\{\textrm e^{-t\boldsymbol A^{(l)}}\mathcal E_{\boldsymbol \theta}^{(l)}(\boldsymbol x_t)\}_{k=1}^l\cup\{\boldsymbol w_0^{(l)}\}_{k=l+1}^L$. In addition, the inequality $\overset{(*)}\ge$ holds under the assumption (1) where the maximal entanglement of Koopman observables at different levels $\displaystyle \rho \triangleq\max_{l, k=1, \cdots, L}\Big|\text{Corr}\left[\mathcal H(\boldsymbol w_l)-\mathcal H(\boldsymbol w_{l-1}), \mathcal H(\boldsymbol w_k)-\mathcal H(\boldsymbol w_{k-1})\right]\Big|$ is small. On the other hand, the inequality $\overset{(**)}\ge$ and $\overset{(***)}\ge$ holds when $\eta ^{(l)}\triangleq\inf_{\Delta \boldsymbol w}\Vert \mathcal H(\boldsymbol w^{(l)}+\Delta \boldsymbol w)-\mathcal H(\boldsymbol w^{(l)})\Vert _2^2~/~\Vert \Delta \boldsymbol w\Vert _2^2$ is the minimal spectrum of $\mathcal H$, and $\alpha \triangleq r(\boldsymbol A^{(l)})$ is the spectral radius of matrix $\boldsymbol A^{(l)}$. 

Under the assumptions, there holds $\rho \to0^+$ and $\alpha \to0^+$. Consequently, minimizing the left-hand side (as in the proposed algorithm) is equivalent to the minimization of Koopman observables to the right-hand side. 
\end{proof}

\subsection{Formal Statement and Proof of Theorem 3.1}
\label{app:Formal Statement and Proof of Theorem 3.1}
In this section, we provide the formal statement and proof of Theorem 3.1. Before that, we first give Definition C.1, Proposition C.3, and Lemma C.1 to introduce Theorem 3.1.

In order to illustrate the feasibility of our proposed method, we propose to measure the complexity of functions by the estimation accuracy using the simplicial complex created by Delaunay triangulation. Def. \ref{def:sim-error} first introduces the simplicial error, which reflects the complexity of the subset $\Omega$ as $\varepsilon _N(\Omega )=0$ if and only if $\dim\Omega =m$ and $\Omega$ is convex. 

\begin{definition}[The Simplicial Error]
\label{def:sim-error}
For a compact subset of an $m$-dimensional manifold $\Omega \subset\mathcal  M^m\subset\mathbb R^d$, the simplicial error of it is $\varepsilon _N(\Omega )\triangleq\inf_{\Xi \subset\Omega,~| S_m(\Xi )|=N}\varepsilon _\Xi (\Omega )$ where $\Xi $ is a set of $N$ generic point samples in $\Omega $, $\varepsilon _\Xi (\Omega)\triangleq\sup_{x\in\Omega }\min_{s\in S_m(\Xi )}d(x, s)$ is the error of the $m$-dimensional Delaunay triangulation $S_m(\Xi)$ of the point set $\Xi$, and $d(x, s)\triangleq\inf_{y\in s}\Vert x-y\Vert$ is the minimal distance from $x$ to simplex $s$. The norm $\Vert \cdot \Vert $ is commonly $L_2$-norm. 
\end{definition}

We then propose to measure the theoretical error bounds for network estimation by the error between its graph and a closest simplicial complex. 
Firstly, Prop. \ref{prop:func-error} defines the function complexity, which is intuitively influenced by the complexity of its domain and range sets. The conclusion in Lem. \ref{app:Lemma C.1} further shows a bound for the estimation error of the function by a network using ReLU activations. The error is closely related to the function complexity we defined, which is coherent with the common knowledge, as the network creates a simplicial complex with each activation creating a surface between simplicials. 

\begin{proposition}
\label{prop:func-error}
The function simplicial error is $\varepsilon _N(f) \triangleq \varepsilon _N(\left\{(\boldsymbol x, f(\boldsymbol x))|\boldsymbol x\in\Omega \right\})$ for $f:\Omega\to\Gamma$. We then have $\max\{\varepsilon _N(\Omega ), \varepsilon _N(\Gamma )\}\le \varepsilon _N(f)\le \inf_{\Xi \subset\Omega, ~|S_m(\Xi )|=N}\Vert [\varepsilon _\Xi (\Omega ), \varepsilon _\Xi (\Gamma )] \Vert$.
\end{proposition}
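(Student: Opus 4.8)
The plan is to work in the product space $\mathbb R^d\times\mathbb R^{d'}$ that contains the graph $G\triangleq\{(\boldsymbol x,f(\boldsymbol x)):\boldsymbol x\in\Omega\}$, and to exploit that the two orthogonal coordinate projections $\pi_\Omega$ and $\pi_\Gamma$ onto the domain and range factors are $1$-Lipschitz. The key bookkeeping is that a point set $\Xi_G\subset G$ with $N$ Delaunay simplices is in bijection (through $\pi_\Omega$) with a point set $\Xi\subset\Omega$ and its image $f(\Xi)\subset\Gamma$, where I read $\varepsilon_\Xi(\Gamma)$ as the simplicial error of the image samples $f(\Xi)$ in $\Gamma$; this lets me compare the three triangulations $S_m(\Xi_G)$, $S_m(\Xi)$ and $S_m(f(\Xi))$ vertex-by-vertex. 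Throughout I would use the approximation-optimality of Delaunay triangulations for generic, well-sampled vertex sets, namely that among triangulations of a fixed vertex set the Delaunay one (nearly) minimizes the Hausdorff-type error $\sup_x\min_s d(x,s)$ of Def.~\ref{def:sim-error}; this is what makes $\varepsilon_\Xi(\cdot)$ behave monotonically under the constructions below.

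For the lower bound I would fix any $\Xi$ and show $\varepsilon_{\Xi_G}(G)\ge\varepsilon_\Xi(\Omega)$, and symmetrically $\varepsilon_{\Xi_G}(G)\ge\varepsilon_{f(\Xi)}(\Gamma)$; taking the infimum over $\Xi$ then yields $\varepsilon_N(f)\ge\max\{\varepsilon_N(\Omega),\varepsilon_N(\Gamma)\}$. The mechanism is that for any $p=(\boldsymbol x,f(\boldsymbol x))\in G$ and any graph simplex $s_G$, the $1$-Lipschitz property gives $d(p,s_G)\ge d(\boldsymbol x,\pi_\Omega s_G)$, so the graph error dominates the covering error of the projected complex $\pi_\Omega(S_m(\Xi_G))$; invoking Delaunay optimality to replace this projected complex by $S_m(\Xi)$ closes the argument, and the same reasoning with $\pi_\Gamma$ handles $\Gamma$.

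For the upper bound I would instead fix $\Xi\subset\Omega$, lift the domain Delaunay triangulation $S_m(\Xi)$ to an aligned triangulation $T_G$ of $\Xi_G$, and bound the error of $T_G$ from above; Delaunay optimality on the graph side then gives $\varepsilon_{\Xi_G}(G)\le\varepsilon^{T_G}_{\Xi_G}(G)$, and taking $\inf_\Xi$ produces the stated right-hand side. Concretely, for $p=(\boldsymbol x,f(\boldsymbol x))$ I pick the domain simplex realizing $d(\boldsymbol x,S_m(\Xi))\le\varepsilon_\Xi(\Omega)$ with barycentric witness $\boldsymbol y=\sum_i\lambda_i\xi_i$, and use the aligned graph point $(\boldsymbol y,\sum_i\lambda_i f(\xi_i))$, so that $d(p,T_G)^2\le\varepsilon_\Xi(\Omega)^2+\|f(\boldsymbol x)-\sum_i\lambda_i f(\xi_i)\|^2$ — the Pythagorean split responsible for the product norm $\|[\varepsilon_\Xi(\Omega),\varepsilon_\Xi(\Gamma)]\|$.

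The hard part will be the second term of this split: the barycentric coordinates $\lambda$ are forced by the domain-closest point, whereas $\varepsilon_\Xi(\Gamma)$ is the error of the range-closest point, so a priori $\|f(\boldsymbol x)-\sum_i\lambda_i f(\xi_i)\|$ is only a piecewise-linear interpolation error, which in general exceeds $\varepsilon_\Xi(\Gamma)$. Reconciling the two — showing that the domain-optimal and range-optimal witnesses nearly coincide so that both contributions are controlled simultaneously and add in quadrature — is where I expect the real effort, and it is exactly here that the genericity of the samples and the regularity of $f$ enter: I would argue that for well-sampled $\Xi$ the induced range interpolation error matches $\varepsilon_\Xi(\Gamma)$ to leading order in the sampling density, so the inequality holds up to the higher-order terms already absorbed by $\inf_\Xi$. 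The lower bound, by contrast, I expect to be routine once the $1$-Lipschitz projection observation is in place.
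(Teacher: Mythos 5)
Your approach is essentially the paper's: the lower bound comes from applying the $1$-Lipschitz coordinate projections $P_{\Omega}$ and $P_{\Gamma}$ to the graph triangulation (the paper likewise writes $d(\theta,s)\ge d(P_{\Omega}\theta,P_{\Omega}s)$ and identifies the projected complex's error with $\varepsilon_N(\Omega)$, implicitly using the same Delaunay-optimality you invoke), and the upper bound from decomposing the product-space distance as $\bigl\Vert[\Vert P_{\Omega}(\theta-\varphi)\Vert,\Vert P_{\Gamma}(\theta-\varphi)\Vert]\bigr\Vert$ and bounding the two components by $\varepsilon_{\Xi}(\Omega)$ and $\varepsilon_{f(\Xi)}(\Gamma)$. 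The ``hard part'' you flag --- that the witness $\varphi$ realizing the domain error need not simultaneously realize the range error, so the two components cannot obviously be controlled at once --- is precisely the step the paper's proof asserts in a single unexplained inequality, so you have correctly isolated the one genuinely nontrivial point rather than missed an idea.
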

\begin{proof}

We first prove the left side. 
$$
\begin{aligned}
\varepsilon _N(f) =& \inf_{\substack{\Xi \subset\Theta \\| S_m(\Xi )|=N}}\left[\sup_{\theta \in\Theta }\min_{s\in S_m(\Xi )}d(\theta , s)\right]\ge\inf_{\substack{\Xi \subset\Theta \\| S_m(\Xi )|=N}}\left[\sup_{\theta \in\Theta }\min_{s\in S_m(\Xi )}d(P_{\Omega }\theta , P_{\Omega }s)\right], \\
=& \inf_{\substack{P_{\Omega }\Xi \subset P_{\Omega }\Theta \\| S_m(P_{\Omega }\Xi )|=N}}\left[\sup_{P_{\Omega }\theta \in P_{\Omega }\Theta }\min_{P_{\Omega }s\in S_m(P_{\Omega }\Xi )}d(P_{\Omega }\theta , P_{\Omega }s)\right] = \varepsilon _N(P_{\Omega }\Theta ) = \varepsilon _N(\Omega ),
\end{aligned}
$$
where $P_{\Omega }$ is the projector to $\Omega $. Similarly, 
$$
\begin{aligned}
\varepsilon _N(f) =& \inf_{\substack{\Xi \subset\Theta \\| S_m(\Xi )|=N}}\left[\sup_{\theta \in\Theta }\min_{s\in S_m(\Xi )}d(\theta , s)\right]\ge\inf_{\substack{\Xi \subset\Theta \\| S_m(\Xi )|=N}}\left[\sup_{\theta \in\Theta }\min_{s\in S_m(\Xi )}d(P_{\Gamma }\theta , P_{\Gamma }s)\right], \\
=& \inf_{\substack{P_{\Omega }\Xi \subset \Omega \\| S_m(P_{\Omega }\Xi )|=N}}\left[\sup_{P_{\Gamma }\theta \in P_{\Gamma }\Theta }\min_{P_{\Gamma }s\in S_m[f(P_{\Omega }\Xi )]}d[P_{\Gamma }\theta , P_{\Gamma }s]\right], \\
=& \inf_{\substack{\tilde\Xi \subset\Omega \\|S_m(\tilde\Xi)|=N}}\varepsilon _{f(\tilde\Xi)}(\Gamma )\ge \varepsilon _N(\Gamma ). 
\end{aligned}
$$
On the other hand, the right-hand side holds due to, 
$$
\begin{aligned}
\varepsilon _N(f) =& \inf_{\substack{\Xi \subset\Theta \\| S_m(\Xi )|=N}}\left[\sup_{\theta \in\Theta }\min_{s\in S_m(\Xi )}\inf_{\varphi \in s}\Big\Vert \big[\Vert P_{\Omega }(\theta -\varphi )\Vert , \Vert P_{\Gamma }(\theta -\varphi )\Vert \big]\Big\Vert\right], \\
\le& \inf_{\substack{\Xi \subset\Theta \\| S_m(\Xi )|=N}}\left[\sup_{\theta \in\Theta }\min_{s\in S_m(\Xi )}\inf_{\varphi \in s}\Big\Vert [\varepsilon _\Xi (P_{\Omega }\Theta ), \varepsilon _\Xi (P_{\Gamma }\Theta )]\Big\Vert \right], \\
\overset\star=& \inf_{\substack{\tilde\Xi \subset\Omega \\| S_m(\tilde\Xi )|=N}}\Big\Vert [\varepsilon _{\tilde\Xi} (\Omega ), \varepsilon _{f(\tilde\Xi)} (\Gamma )]\Big\Vert,
\end{aligned}
$$
where $\overset\star=$ holds as the norm inside $\inf$ is a constant w.r.t. $\theta , \varphi $ and $s$. 
\end{proof}

\begin{lemma}
\label{app:Lemma C.1}
The estimation of function $f:\Omega ^m\to\Gamma$ by a network $\mathcal F$ with $n_a$ ReLU activations satisfies $\mathbb P\left(\Vert \mathcal F(x) - f(x)\Vert \le \varepsilon _{\lceil2n_a/(m+1)\rceil}(f)\right)\ge1-\delta$ where $\delta$ indicates the amount undertrained. 
\end{lemma}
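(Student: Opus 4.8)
The plan is to exploit the fact that a network $\mathcal F$ with $n_a$ ReLU activations is a continuous piecewise-linear (CPWL) map, and to match its linear pieces against the Delaunay simplices that define $\varepsilon_N(f)$ in Def.~\ref{def:sim-error}. First I would observe that the $n_a$ activation hyperplanes partition the domain $\Omega^m$ into polyhedral linear regions; refining these into simplices, the graph $\{(\boldsymbol x,\mathcal F(\boldsymbol x))\}$ becomes an $m$-dimensional simplicial complex, so that estimating $f$ by $\mathcal F$ is literally estimating the graph of $f$ by a simplicial complex. The target is then to show that the simplicial resolution $\mathcal F$ can realize is at least $N=\lceil 2n_a/(m+1)\rceil$, so that $\mathcal F$ is at least as expressive as the optimal $N$-simplex interpolant whose graph-deviation from $f$ is, by definition, $\varepsilon_N(f)$ (the relevant complexity measure sandwiched in Prop.~\ref{prop:func-error}).

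For the counting step I would invoke the facet-sharing structure of a simplicial complex to determine how many simplices $n_a$ activations can support. Every $m$-simplex has $m+1$ codimension-one facets, and in the interior each facet is shared by exactly two simplices, so a complex with $S$ simplices has at most $S(m+1)/2$ interior facets, each of which is a locus where the gradient of a CPWL map jumps and can therefore be supplied by one activation hyperplane. Hence $n_a$ activations furnish the facets of a triangulation with up to $S=2n_a/(m+1)$ simplices, identifying $N=\lceil 2n_a/(m+1)\rceil$ as the simplicial budget afforded by the network.

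Next I would instantiate the ideal $\mathcal F$ as the CPWL interpolant of $f$ supported on the minimizing sample set $\Xi$ attaining $\varepsilon_N(f)$. By construction its graph is a simplicial complex built from $N$ samples of the graph of $f$, so the geometric deviation $\sup_{\boldsymbol x}\min_s d((\boldsymbol x,f(\boldsymbol x)),s)$ equals $\varepsilon_N(f)$. Translating this graph-space deviation into the pointwise output error $\|\mathcal F(\boldsymbol x)-f(\boldsymbol x)\|$ yields the stated bound for the ideal interpolant; the probability $1-\delta$ then absorbs the gap between this ideal optimizer and the actually trained network, with $\delta=0$ recovering a sure bound under exact training, consistent with the role of $\delta$ in Thm.~\ref{thm:main}.

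The main obstacle I anticipate is exactly this passage from the geometric simplicial error to the vertical error: $\varepsilon_N(f)$ measures distance to simplices in the product space $\Omega\times\Gamma$, whereas the lemma bounds $\|\mathcal F(\boldsymbol x)-f(\boldsymbol x)\|$, and the two coincide only up to a factor governed by the slope of the interpolating simplices relative to the $\Gamma$-axis. Controlling this factor requires either a Lipschitz/transversality hypothesis ensuring the simplices are not too steep, or a direct argument that the projection onto $\Omega$ stays non-degenerate on each simplex. A secondary difficulty is making the activation-to-facet correspondence rigorous, since a single ReLU hyperplane may cut across several facets and the continuity constraints across shared facets reduce the effective degrees of freedom; a clean proof would fix a canonical subdivision (e.g.\ a Kuhn/Freudenthal triangulation) for which the identity $n_a\leftrightarrow S(m+1)/2$ is exact rather than a mere bound.
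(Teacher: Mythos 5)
Your proposal follows essentially the same route as the paper's own (very terse) proof: the paper likewise matches each $(m-1)$-dimensional simplex facet to one ReLU "cuspidal point" and uses the fact that each of the $S$ simplices has $m+1$ facets, each shared by exactly two simplices, to obtain the budget $S \le \lceil 2n_a/(m+1)\rceil$, with the $1-\delta$ absorbing undertraining. The two obstacles you flag — converting the graph-space simplicial deviation into the vertical error $\Vert\mathcal F(x)-f(x)\Vert$, and making the activation-to-facet correspondence exact rather than heuristic — are genuine and are left unaddressed by the paper as well.
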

\begin{proof}
With reference to the universal approximation theorem, each $(m-1)$- dimensional face of the simplex matches a cuspidal point of a ReLU activation. For a network with $n_a$ activations, the maximal number of faces is $n_a$ which means there are at most $\lceil\frac{2n_a}{m+1}\rceil$ simplices in the simplicial complex, as each simplex has $m+1$ faces and all faces are clamped between (exactly) two simplices. 

For activations other than ReLU, one may use intrinsic Delaunay triangulation to replace the Delaunay triangulation. 
\end{proof}

\begin{theorem}[Formal Statement of Theorem 3.1]
\label{app:Theorem C.1}
If the noise space $\boldsymbol x_T\in\Xi \subset\mathbb R^{n\times n}$ and the Koopman space $\Psi \subset\mathbb R^D$ are compact, the ideal estimation error of our proposed HKD model with encoder $\mathcal E$ and decoder $\mathcal D$ networks of ${n_a}/{2}$ activation functions is smaller than that of an end-to-end one-step model $\mathcal F$ with $n_a$ activation functions, i.e., $err_{\text{HKD}}\le err_{\text{one-step}} + O(\kappa )$ holds for any one-step diffusion where
\begin{displaymath}
err_{\text{HKD}} = \varepsilon _{\lceil n_a/(n^2+1)\rceil}(\mathcal E)+\kappa +\varepsilon _{\lceil n_a/(D+1)\rceil}(\mathcal D)\text{ and }err_{\text{one-step}} = \varepsilon _{\lceil2n_a/(n^2+1)\rceil}(\mathcal F)
\end{displaymath}
for the one-step mapping $\mathcal F$, the encoder $\mathcal E$ and decoder $\mathcal D$ in the proposed framework. The additional Koopman error $\kappa $ is
\begin{displaymath}
\kappa \triangleq\textstyle\max\left\{\frac{2\sqrt3\sigma m^2\rho _{\sup}}{\sqrt{N(\frac{\delta}3 -2m^{-\frac{r}{3}})}-\sqrt3\sigma m^2}, \frac{2\sqrt3\sigma m^2\rho _{\inf}}{\sqrt{N(\frac{\delta}3 -2m^{-\frac{r}{3}})}\cdot \rho _{\inf} - \sqrt3\sigma m^2}\right\} + o(m^{-\frac{r}{3}}),
\end{displaymath}
which is small when the Koopman dimension $m$ is sufficient and the dataset size $N$ is large enough. Here, parameter $\sigma $ is the standard deviation of the observables, $\delta $ is $0$ when the network is perfectly trained, $(\rho _{\inf}, \rho _{\sup})$ is the true spectral range of the Koopman operator and $r\to-\infty$ when the $m$-dimensional Koopman process adequately model the Koopman space.
\end{theorem}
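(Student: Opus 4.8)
The plan is to reduce the theorem to two largely independent pieces: a purely combinatorial comparison of simplicial errors that accounts for the three-stage architecture, and a separate probabilistic/perturbation bound on the Koopman term $\kappa$. First I would apply Lemma \ref{app:Lemma C.1} componentwise. Splitting the activation budget $n_a$ evenly, the encoder $\mathcal E$ (input dimension $n^2$, $n_a/2$ activations) incurs error at most $\varepsilon_{\lceil n_a/(n^2+1)\rceil}(\mathcal E)$ and the decoder $\mathcal D$ (input dimension $D$, $n_a/2$ activations) incurs at most $\varepsilon_{\lceil n_a/(D+1)\rceil}(\mathcal D)$, while the end-to-end network $\mathcal F$ with the full budget incurs $\varepsilon_{\lceil 2n_a/(n^2+1)\rceil}(\mathcal F)$. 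Propagating the componentwise errors through the composition $\mathcal D\circ\Phi_{\mathrm{Koop}}\circ\mathcal E$ via the triangle inequality and the Lipschitz constants of $\mathcal D$ and of the linear flow $\Phi_{\mathrm{Koop}}$ (both $O(1)$ under the compactness and near-unit-spectrum assumptions) yields the additive form $err_{\text{HKD}} = \varepsilon_{\lceil n_a/(n^2+1)\rceil}(\mathcal E) + \kappa + \varepsilon_{\lceil n_a/(D+1)\rceil}(\mathcal D)$ stated in the theorem.

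The core inequality then reduces to showing $\varepsilon_{\lceil n_a/(n^2+1)\rceil}(\mathcal E) + \varepsilon_{\lceil n_a/(D+1)\rceil}(\mathcal D) \le \varepsilon_{\lceil 2n_a/(n^2+1)\rceil}(\mathcal F)$, after which the Koopman term is absorbed into $O(\kappa)$. To establish this I would exploit the factorization of the ideal target $\mathcal F^\star = \mathcal D^\star\circ\mathcal P\circ\mathcal E^\star$ with $\mathcal P$ linear, and invoke Proposition \ref{prop:func-error}. The key observation is that a linear map sends simplices to simplices and hence preserves Delaunay structure up to an affine change of coordinates, so the Koopman stage contributes no simplicial complexity; consequently the graph of $\mathcal F^\star$ is at least as complex as that of either constituent map and, more strongly, its triangulation must simultaneously resolve the bends introduced by $\mathcal E^\star$ and by $\mathcal D^\star$. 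Quantitatively I would argue that doubling the simplex count for the single complex map $\mathcal F$ cannot compensate for the complexity that is instead distributed across the two simpler maps at half budget each, which is precisely the intuition in Fig. \ref{fig:simplicial} that more intricate surfaces demand denser triangulation. Making this comparison rigorous through the two-sided bounds of Proposition \ref{prop:func-error} applied to the projections onto the domain and range of each stage is the technical heart of the argument.

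For the Koopman term I would treat $\kappa$ as the trajectory error incurred by replacing the true operator $\boldsymbol A$ with its empirical estimate $\hat{\boldsymbol A}$ obtained from $N$ samples in the $m$-dimensional observable space. The plan is to (i) bound the operator estimation error by a matrix concentration inequality, giving a deviation of order $\sigma m^2/\sqrt N$ where $\sigma$ is the observable standard deviation and $m^2$ counts the matrix entries, with the $\sqrt 3$ factor and the $\delta/3$ term arising from a three-way split of the tolerance together with the residual training error $\delta$; (ii) propagate this operator perturbation to the flow $e^{tA}$ using a matrix-exponential perturbation bound, controlled two-sidedly by the spectral range $(\rho_{\inf},\rho_{\sup})$, which produces the maximum of the two fractions in the stated form of $\kappa$; and (iii) add the truncation bias $o(m^{-r/3})$ coming from approximating the infinite-dimensional Koopman operator by its $m$-dimensional compression, which vanishes as $r\to-\infty$. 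Together these give the closed form for $\kappa$, small whenever $N$ is large and $m$ is adequate.

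The hard part will be the simplicial comparison in the second paragraph: the simplicial error is not obviously subadditive under composition, and one must show that splitting the budget across $\mathcal E$ and $\mathcal D$ never does worse (up to $O(\kappa)$) than spending it all on $\mathcal F$. The crux is quantifying how much the linearity of $\mathcal P$ actually buys, namely verifying that the affine-invariance of Delaunay triangulations lets the encoder and decoder complexities add rather than multiply, while simultaneously controlling the Lipschitz propagation constants so that the componentwise errors remain additive. A secondary obstacle is ensuring the denominators in $\kappa$ stay positive, which requires the regime $\sqrt{N(\delta/3 - 2m^{-r/3})} > \sqrt 3\,\sigma m^2$ and hence a joint lower bound on $N$ relative to $m$.
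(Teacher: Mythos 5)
Your overall architecture coincides with the paper's: Lemma~\ref{app:Lemma C.1} converts the activation budget into a simplex count separately for $\mathcal E$, $\mathcal D$, and $\mathcal F$; the Koopman term $\kappa$ is obtained from a concentration bound on the estimated operator propagated through the flow by a Gr\"onwall-type comparison (the paper solves $\mathrm dD_t/\mathrm dt=-\rho D_t-\kappa\zeta$ explicitly, which is where the factor $(\mathrm e^{\rho T}-1)/\rho$ comes from), plus the $o(m^{-r/3})$ compression bias; and the failure probability is split three ways across encoder, Koopman flow, and decoder, which is exactly where the $\delta/3$ in the denominators of $\kappa$ originates. The one step where you genuinely diverge is the central comparison $\varepsilon_{\lceil n_a/(n^2+1)\rceil}(\mathcal E)+\varepsilon_{\lceil n_a/(D+1)\rceil}(\mathcal D)\le\varepsilon_{\lceil 2n_a/(n^2+1)\rceil}(\mathcal F)$, which you propose to derive from affine-invariance of Delaunay triangulations under the linear Koopman stage. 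That route is shaky: Delaunay triangulations are invariant under similarities but not under general affine or linear maps (the empty-circumball property is not preserved), so ``the linear stage contributes no simplicial complexity'' does not follow from affine-invariance. The paper instead posits that the data lie on an $m$-dimensional manifold with a global chart $\mathfrak F$ and exploits that the noise space $\Xi$ and the Koopman image $\Psi=\mathfrak F(\Omega)$ are compact and convex, hence have vanishing simplicial error by Definition~\ref{def:sim-error}; the lower bound of Proposition~\ref{prop:func-error} then gives $\varepsilon(\mathcal F)\ge\varepsilon(\Omega)$, while its upper bound gives $\varepsilon(\mathfrak F)\le\bigl\Vert[\varepsilon(\Xi),\varepsilon(\Psi)]\bigr\Vert=0$ and $\varepsilon(\mathfrak F^{-1})\le\bigl\Vert[\varepsilon(\Psi),\varepsilon(\Omega)]\bigr\Vert=\varepsilon(\Omega)$, so the entire complexity is concentrated in the data manifold $\Omega$ and is paid once by the decoder and once by the one-step map, with the encoder essentially free. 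If you replace your affine-invariance argument with this convexity-of-$\Xi$-and-$\Psi$ argument, your plan reproduces the paper's proof; as written, that single step would not go through.
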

\begin{proof}
Under the assumption of AE-based data compression methods, we assume that the data are approximately on an $m$-dimensional manifold $\mathcal M^m\subset \mathbb R^{n\times n}$ defined by a bijection $\mathfrak F: \mathcal M\leftrightarrow \mathbb R^m$, and that the projection of the dataset on the manifold is $\Omega \subset\mathcal M$. Let $\Psi =\mathfrak F(\Omega ) $ be a compact subset of $\mathbb R^m$ that encodes the dataset. 

We consider $\boldsymbol x_t(p)$ as a function w.r.t. spatial position $p$ and define operators $\mathcal F_t: \boldsymbol x_t(\cdot )\mapsto f(\boldsymbol x_t(\cdot ), t)$, $\mathcal G_t: \boldsymbol x_t(\cdot )\mapsto g^2(t)\cdot \nabla \log p_t(\boldsymbol x_t(\cdot ))$, hence we have, 
$$
\delta (\mathfrak F\boldsymbol x_t) = \left[\mathfrak F\mathcal F_t\boldsymbol x_t - \mathfrak F\mathcal G_t\boldsymbol x_t\right]\delta t = \mathfrak F(\mathcal F_t-\mathcal G_t)\mathfrak F^{-1}(\mathfrak F\boldsymbol x_t)\delta t, 
$$
which satisfies the definition of a Koopman process where the operator is $\mathcal K_t=\mathcal I+\mathfrak F(\mathcal F_t-\mathcal G_t)\mathfrak F^{-1}$. 

Letting $\boldsymbol \xi = (\mathcal K_t - \hat{\boldsymbol K}_t)\boldsymbol y_t$ for the observables.
We have the bound that
$$
\mathbb P\left(\Vert (\mathcal K_t-\hat{\boldsymbol K}_t)\boldsymbol y_t\Vert \le \kappa_t \Vert \boldsymbol y_t\Vert \right)\ge 1-\delta ,
$$
where $\kappa_t =2\sqrt3~\sigma m^2r(\mathcal K_t)\left/\left[\sqrt{N\big(\delta -2m^{-\frac{r}{3}}\big)}\cdot \min\{1,r(\mathcal K_t)\}-\sqrt3~\sigma m^2\right]\right.+o(m^{-\frac{r}{3}})$. 

Here, the bound factor $\kappa _t$ is related to time $t$ while it has a uniform bound factor, 
\begin{displaymath}
\kappa ^{(\delta )}\triangleq\sup_{t\in[0, T]}\kappa _t = \textstyle\max\left\{\frac{2\sqrt3\sigma m^2\rho _{\sup}}{\sqrt{N(\delta -2m^{-\frac{r}{3}})}-\sqrt3\sigma m^2}, \frac{2\sqrt3\sigma m^2\rho_{\inf}}{\sqrt{N(\delta -2m^{-\frac{r}{3}})}\cdot \rho_{\inf} - \sqrt3\sigma m^2}\right\} + o(m^{-\frac{r}{3}}),
\end{displaymath}
where $\rho_{\inf} \triangleq\inf_tr(\mathcal K_t)$ and $\rho _{\sup} \triangleq\sup_tr(\mathcal K_t)$. 

The estimated Koopman model with an encoder $\mathcal E$ estimating mapping $\mathfrak F$ and a decoder $\mathcal D$ estimating $\mathfrak F^{-1}$ (with each of them having $n_a/2$ activations) has a probabilistic error bound, 
$$
\begin{aligned}
&\mathbb P\left(\left\Vert \mathcal D\boldsymbol z_0 - \boldsymbol x_0\right\Vert \le \varepsilon \right),\text{ where }\boldsymbol z_t=\boldsymbol z_T+\int_T^t (\hat{\boldsymbol K}_s-\boldsymbol{I})\boldsymbol z_s~\textrm ds,~\boldsymbol z_T=\mathcal E(\boldsymbol x_T), \\
&\text{and } \boldsymbol x_0 = \mathfrak F^{-1}\boldsymbol y_0,~\boldsymbol y_t = \boldsymbol y_T+\int_T^t(\mathcal K_s-\mathcal I)\boldsymbol y_s~\textrm ds, ~\boldsymbol y_T=\mathfrak F\boldsymbol x_T, \\
\ge& \mathbb P(\Vert \mathcal D\boldsymbol z_0 - \mathfrak F^{-1}\boldsymbol z_0\Vert +\Vert \mathfrak F^{-1}\boldsymbol z_0 - \mathfrak F^{-1}\boldsymbol y_0\Vert \le \varepsilon ) \\
\overset\star\ge& \mathbb P(\Vert \mathcal D\boldsymbol z_0 - \mathfrak F^{-1}\boldsymbol z_0\Vert\le\varepsilon _{\lceil \frac{n_a}{D+1}\rceil}(\mathfrak F^{-1}))\cdot \mathbb P(\Vert \boldsymbol z_0-{\boldsymbol y}_0\Vert \le \kappa^{(\delta/3)}\zeta (\textrm e^{\rho T}-1)/\rho )\\
&\cdot \mathbb P(\Vert \boldsymbol z_T-\boldsymbol y_T\Vert \le\varepsilon _{\lceil\frac{n_a}{n^2+1}\rceil}(\mathfrak F)), \\
\ge& (1-\delta/3)^3\ge1-\delta . 
\end{aligned}
$$
Here, $\overset*\ge$ holds due to Lemma 1 and $\overset\star\ge$ holds when
$$
\varepsilon =\varepsilon _{\lceil \frac{n_a}{D+1}\rceil}(\mathfrak F^{-1}) + r(\mathfrak F^{-1}) \cdot \kappa ^{(\delta/3)}\zeta (\textrm e^{\rho T}-1)/\rho  + r(\mathfrak F^{-1}) \cdot \textrm e^{\rho T}\cdot \varepsilon _{\lceil\frac{n_a}{n^2+1}\rceil}(\mathfrak F),
$$
where $r(\mathfrak F^{-1}) $ is the spectrum radius of $\mathfrak F^{-1}$ and the bound for $\Vert \boldsymbol z_0-\boldsymbol y_0\Vert $ is given by
$$
\begin{aligned}
\Vert \boldsymbol z_t-\boldsymbol y_t\Vert  \le& \Vert \boldsymbol z_T-\boldsymbol y_T\Vert  + \left\Vert \int_T^t(\mathcal K_s-\mathcal I)(\boldsymbol z_s-\boldsymbol y_s)\textrm ds\right\Vert + \int_T^t\Vert (\hat{\boldsymbol K}_s-\mathcal K_s)\boldsymbol z_s\Vert \textrm ds, \\
\le& \Vert \boldsymbol z_T-\boldsymbol y_T\Vert  + \rho \int_t^T\Vert \boldsymbol z_s-\boldsymbol y_s\Vert \textrm ds + \kappa\zeta (T-t), \\
\end{aligned}
$$
where $\rho \triangleq\max\{1-\rho _{\inf}, \rho _{\sup}-1\}$ and $\zeta $ is a bound of $\Vert \boldsymbol z_t\Vert $. Let $D_t$ satisfies $D_T=\Vert \boldsymbol z_T-\boldsymbol y_T\Vert$ and $\frac{\textrm dD_t}{\textrm dt}=-\rho D_t-\kappa \zeta $ which leads to $d_t\triangleq\Vert \boldsymbol z_t-\tilde{\boldsymbol z}_t\Vert \le D_t$ as they are positive trajectories. The solution to $D_t$ is $D_t=\frac{\kappa \zeta }{\rho }\left[\textrm e^{\rho (T-t)}-1\right] + D_T\textrm e^{\rho (T-t)}$. 

For ideal $\mathcal K_t$ and $\mathfrak F$, $\varepsilon $ is dominated by $\varepsilon _{\lceil \frac{n_a}{D+1}\rceil}(\mathfrak F^{-1}) + \varepsilon _{\lceil\frac{n_a}{n^2+1}\rceil}(\mathfrak F)$ as $\rho $ is small and $r(\mathfrak F^{-1})\approx 1$. 

On the other hand, the one-step methods aim at estimating $\hat{\boldsymbol x}_0=\mathcal F (\boldsymbol x_T)\triangleq\boldsymbol x_T + \int_T^0(\mathcal F_t-\mathcal G_t)\boldsymbol x_t\textrm dt$ by a network $\mathcal F$ with $n_a$ activation functions. The error satisfies, according to Lemma 1, 
$$
\mathbb P\left(\Vert \mathcal F (\boldsymbol x_T) - \boldsymbol x_0\Vert \le \varepsilon _{\lceil2n_a/(n^2+1)\rceil}(\mathcal F)\right)\ge1-\delta .
$$
Note that ideally, $\boldsymbol x_T\in \Xi \subset\mathbb R^{n\times n}$, $\boldsymbol x_0\in\Omega \subset\mathcal M^m$ and $\mathfrak F\boldsymbol x_0\in\Psi  \subset\mathbb R^m$, where $\Xi $ and $\Lambda $ are compact, hence

\begin{align}
&\varepsilon _{\lceil\frac{2n_a}{n^2+1}\rceil}(\mathcal F), \\
\ge&\max\left\{\varepsilon _{\lceil\frac{2n_a}{n^2+1}\rceil}(\Xi ), \varepsilon _{\lceil\frac{2n_a}{n^2+1}\rceil}(\Omega )\right\}, &\Leftarrow&\text{ Proposition 1}\\
=& \varepsilon _{\lceil\frac{2n_a}{n^2+1}\rceil}(\Omega ), &\Leftarrow&~\Omega \text{ is compact}\\
=& \Vert [\varepsilon _{\mathfrak F(\Delta )}(\Psi  ), \varepsilon _{\Delta }(\Omega )]\Vert + \Vert [\varepsilon _{\cdot }(\Xi ), \varepsilon _{\mathfrak F(\cdot )}(\Psi  )]\Vert , &\Leftarrow&~\Xi \text{ and }\Psi   \text{ are compact}\\
&\text{ where }\Delta \text{ satisfies }\varepsilon _{\Delta }(\Omega )=\varepsilon _{\lceil\frac{2n_a}{n^2+1}\rceil}(\Omega ),&&~\varepsilon _\cdot (\Xi )=\varepsilon _\cdot (\Psi )=0\\
\ge&\varepsilon _{\lceil \frac{n_a}{D+1}\rceil}(\mathfrak F^{-1}) + \varepsilon _{\lceil\frac{n_a}{n^2+1}\rceil}(\mathfrak F). &\Leftarrow&\text{ Proposition 1}\\
\end{align}
Using Lem. \ref{app:Lemma C.1}, we may reach the conclusion in Thm. \ref{app:Theorem C.1}, which shows that if the neural network models are trained properly, the theoretical error bound of the proposed method is smaller than that of any other end-to-end one-step method.
\end{proof}

\begin{figure}[t]
    \centering
\includegraphics[width=\textwidth]{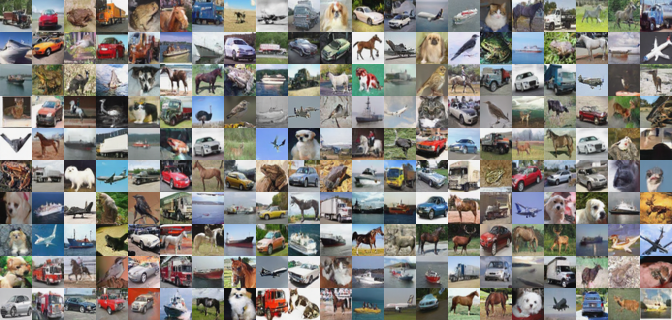}
    \caption{The additional visualization of image generations from HKD trained by the CIFAR10 dataset.}
    \label{appfig:visual_image_cifar10}
\end{figure}

\section{Experimental Supplement}
\label{app:experiments}

\subsection{Additional Experimental Results for Sec. 4.1}
\label{app:addexp4.1}

\begin{wrapfigure}{r}{0.5\textwidth}
    \vspace{-10pt}
    \centering
    \small
    \captionof{table}{Class-conditional sample quality on CIFAR-10 dataset.}
    \label{table:cifar-condition}
    \begin{tabular}{lcc}
    \toprule 
    \textnormal{Methods} & ~~\text{NFE($\downarrow$)}~~ & ~~\text{FID($\downarrow$)}~~   \\
    \midrule
    \textnormal{Score SDE}& 2000&2.20 \\
    EDM & 35 & 1.79 \\
    DMD (w/o REG)& 1&5.58 \\
    DMD (w/o KL)& 1&3.82 \\
    DMD  & 1&2.66 \\
    \midrule
    HKD &{1}&2.77\\
    \bottomrule
    \end{tabular}
\end{wrapfigure}

\paragraph{Conditional Generation on CIFAR-10.}
\label{app:Conditional Generation on CIFAR-10}
In this part, we provide the conditional generation results on CIFAR-10 to show the generalization ability of our framework on class-conditional tasks. The results are provided in Tab. \ref{table:cifar-condition}.

\paragraph{Visual Results.}
\label{app:Visual Results}
In this section, we provide additional qualitative results on CIFAR-10 (unconditional), FFHQ (unconditional), and CIFAR-10 (conditional) to further demonstrate the visual quality and effectiveness of our proposed method. The results are presented in Fig. \ref{appfig:visual_image_cifar10}, Fig. \ref{appfig:visual_image_ffhq}, Fig. \ref{appfig:visual_image_cifar10_cond}, respectively.

\begin{figure}[t]
    \centering
\includegraphics[width=\textwidth]{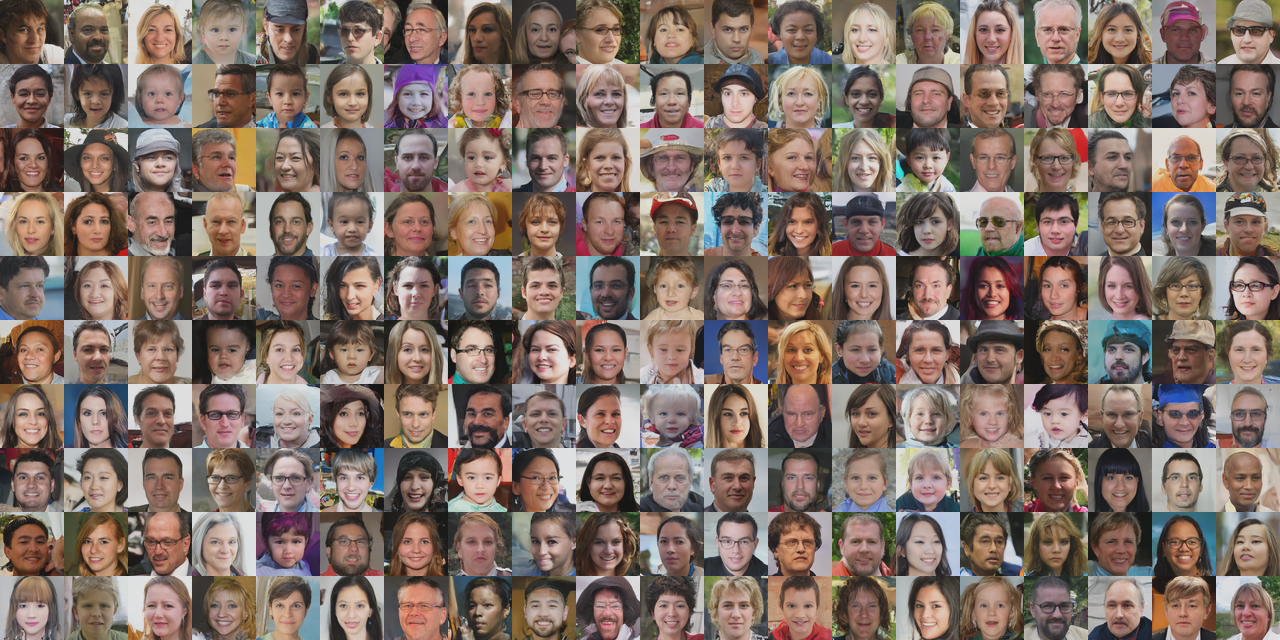}
    \caption{The additional visualization of image generations from HKD trained by the FFHQ dataset.}
    \label{appfig:visual_image_ffhq}
\end{figure}

\begin{figure}[t]
    \centering
\includegraphics[width=\textwidth]{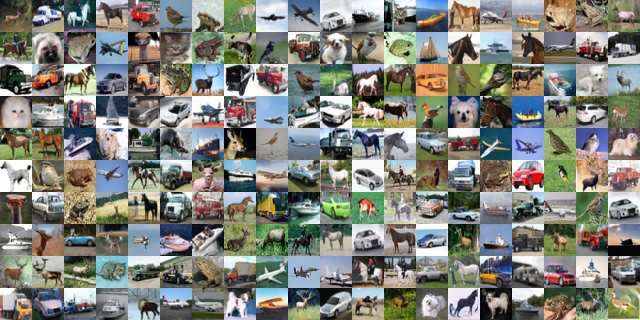}
    \caption{The additional visualization of conditional image generations from HKD trained by the CIFAR10 dataset.}
    \label{appfig:visual_image_cifar10_cond}
\end{figure}

\begin{figure}[h]
    \centering
\includegraphics[width=\textwidth]{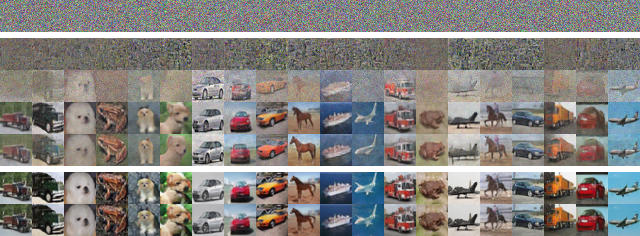}
    \caption{The additional visualization of image modes for the CIFAR10 dataset. The rows correspond to those in the manuscript. }
    \label{appfig:visual_image_modes_cifar10}
\end{figure}

\paragraph{Training Stability  Results.}
\label{app:Training Stability Results}
We include the results of training HKD five times independently on the CIFAR-10 dataset in Tab. \ref{tab:fid_epochs}. The FID scores and their standard deviation are reported in the table below. We summarize the FID results over every 10 epochs during training across the five runs, demonstrating stable convergence behavior.

\paragraph{The Wall-clock Time Per Image for Generation Methods.}
\label{app:The Wall-clock Time Per Image for Generation Methods}
We provide the wall-clock time per image for different generation methods in Tab. \ref{tab:nfe_latency_compact4}, measured on an NVIDIA V100 GPU. These methods were trained on the CIFAR-10 dataset.
\begin{table}[h]
\centering
\caption{FID scores (mean ± standard deviation) during training across epochs.}
\label{tab:fid_epochs}
\begin{tabular}{lccc}
\toprule
 & \textbf{Epoch 0} & \textbf{Epoch 10} & \textbf{Epoch 20} \\
\midrule
\textbf{FID (Mean ± Std)} & 11.82±0.29 & 5.70±0.21 & 4.71±0.17 \\
\midrule
 & \textbf{Epoch 30} & \textbf{Epoch 40} & \textbf{Epoch 50} \\
\midrule
\textbf{FID (Mean ± Std)} & 4.09±0.18 & 3.52±0.11 & 3.31±0.03 \\
\midrule
 & \textbf{Epoch 52} & \textbf{Epoch 53} & \textbf{Before Training} \\
\midrule
\textbf{FID (Mean ± Std)} & 3.31±0.02 & 3.30±0.02 & 447.13 \\
\bottomrule
\end{tabular}
\end{table}

\begin{table}[h]
\centering
\small
\caption{NFE and per-image latency across different generation methods.}
\label{tab:nfe_latency_compact4}
\begin{tabular}{lcccc}
\toprule
 &\textbf{DDPM} & \bf{Score SDE} & \bf{DDIM} & {\bf{DDIM}} \\
\bf{NFE}     & 1000 & 2000 & 10 & 100 \\
\bf{Latency} & 15.32s & 45.89s & 0.51s & 2.35s \\
\midrule
 & {\bf{EDM}} & \bf{EDM} & \bf{KD} & \bf{PD} \\
\bf{NFE}     & 35 & 15 & 1 & 1 \\
\bf{Latency} & 0.69s & 0.30s & 0.26s & 0.28s \\
\midrule
 & \bf{CD (LPIPS)} & \bf{DMD} & \bf{RF-1} & \bf{RF-2} \\
\bf{NFE}     & 1 & 1 & 1 & 1 \\
\bf{Latency} & 0.02s & 0.03s & 0.03s & 0.03s \\
\midrule
 & \bf{RF-3} & \bf{RF-2++} & \bf{CT (LPIPS)} & \bf{iCT} \\
\bf{NFE}     & 1 & 1 & 1 & 1 \\
\bf{Latency} & 0.03s & 0.03s & 0.02s & 0.02s \\
\midrule
 & \bf{iCT-deep} & \bf{ECM} & \bf{HKD} & --- \\
\bf{NFE}     & 1 & 1 & 1 & --- \\
\bf{Latency} & 0.02s & 0.02s & 0.03s & --- \\
\bottomrule
\end{tabular}
\end{table}

\subsection{Additional Experimental Results for Sec. 4.2}
\label{app:addexp4.2}
We provide additional visualization results of image models on the CIFAR-10 dataset in Fig. \ref{appfig:visual_image_modes_cifar10}, and on the FFHQ dataset in Fig. \ref{appfig:visual_image_modes_ffhq}. As shown in Fig. \ref{appfig:visual_image_modes_ffhq}, the reconstructed images using low-frequency components primarily capture the semantic structure of the face. The mid-frequency components contribute to the overall contour and shape details of the face, while the high-frequency components are responsible for fine-grained details such as facial hair.

\begin{figure}[t]
    \centering
\includegraphics[width=\textwidth]{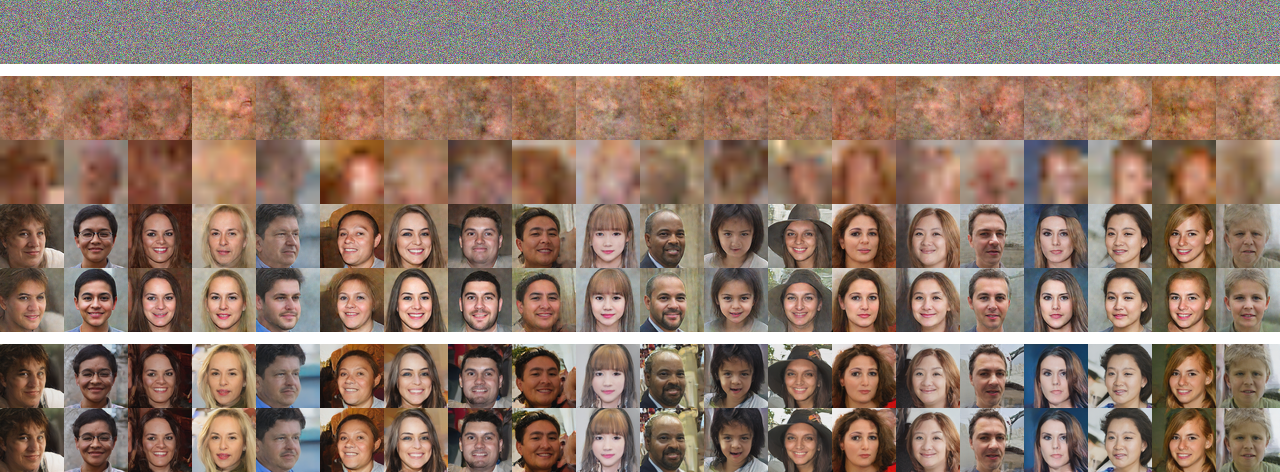}
    \caption{The visualization of image modes for the FFHQ dataset. The rows correspond to those in the manuscript. The third row, indicating the low-frequency semantics, underwent a downsampling to avoid the uncanny valley effect. }
    \label{appfig:visual_image_modes_ffhq}
\end{figure}

\subsection{Additional Experimental Results for Sec. 4.3}
\label{app:addexp4.3}

\paragraph{Additional Results on One-step Image Editing via Frequency-aware Interventions.}
\label{app:Additional Results on One-step Image Editing via Frequency-aware Interventions}
We provide the additional results on the one-step image editing experiment via frequency-aware interventions along the diffusion trajectory. The results are provided in Fig. \ref{appfig:tmix_cifar10}, \ref{appfig:tmix_ffhq}.

\begin{figure}[t]
    \centering
\includegraphics[width=0.8\textwidth]{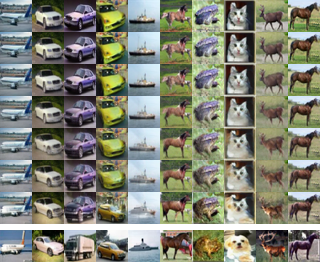}
    \caption{The visualization of frequency-aware interventions of images from CIFAR10. The rows correspond to the columns in the manuscript, meaning the original image for row 1, frequency-aware editing at factors of 10\%, 20\%, 50\%, 80\%, and 90\%, frequency-agnostic editing at a factor of 90\%, and the reference image, respectively. }
    \label{appfig:tmix_cifar10}
\end{figure}

\begin{figure}[t]
    \centering
\includegraphics[width=0.8\textwidth]{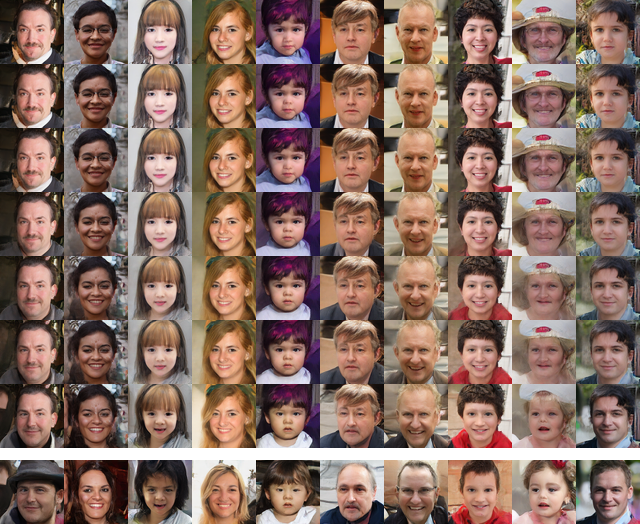}
    \caption{The visualization of frequency-aware interventions of images from FFHQ. The rows correspond to the columns in the manuscript. }
    \label{appfig:tmix_ffhq}
\end{figure}

\subsection{More Implementation Details about Experiments for Sec. 4.1}
\label{app:expdetails}

\paragraph{Network Architecture.}
In our experiment, both the encoder and decoder are designed to follow the architecture of the EDM \cite{karras2022elucidating} encoder and decoder, leveraging the proven effectiveness of existing diffusion model designs, similar to consistency models \cite{song2023consistency}.  Inspired by ECM \cite{geng2024consistency}, we utilize EDM's pre-trained weights for our encoder and decoder modules to provide structured latent-to-output mappings and prevent mode collapse, retaining rich hierarchical features acquired during large-scale training. For each linear operator $\boldsymbol{A}^{(l)}(i,j)$, which adopts a block-diagonal structure, we perform vectorization by concatenating the real and imaginary parts—placing the real components in the first half of the vector and the imaginary components in the second half. The optimization is then conducted over the elements of this vectorized form. We train our model end-to-end.

Compared to conventional diffusion models, our framework introduces only one additional parameter component, the linear operator $\boldsymbol{A}$. However, due to the structured block-diagonal assumption imposed on $\boldsymbol{A}$, the overall increase in parameter count is minimal.

In addition, the EDM U-Net architecture supports augment\_label as an input, which we set to None. Additionally, as in EDM, the U-Net receives the noise scale $\sigma$ as an input. Although the observable functions in our model are inherently time-independent, we follow the EDM convention and feed the time-dependent $\sigma(t)$ into the encoder to maintain architectural compatibility and enhance representational capacity. In contrast, our decoder only operates on the final timestep; therefore, it consistently receives $\sigma(t=1)$ as input. The EDM architecture also allows for class\_labels as input. For unconditional generation tasks on CIFAR-10 and FFHQ, we set class\_labels = None. For conditional generation on CIFAR-10, we adopt the same label configuration as in EDM.

\paragraph{Hyperparameter Setting.}
We conduct experiments using 8 NVIDIA V100 GPUs, with a batch size of 256 for the CIFAR-10 dataset and 64 for the FFHQ dataset. The loss function weights are set as $\lambda_2 = 1$ and $\lambda_1 = 10^{-3^{(\text{current\_epoch} / \text{overall\_epoch})}}$, where $\lambda_1$ decays exponentially over the course of training. For the implementation of the trajectory consistency loss, we employ a Monte Carlo sampling strategy: in each training iteration, four intermediate timesteps are randomly sampled uniformly, and the loss is computed as the average over these sampled time points.

\subsection{More Details about Comparison Methods}
\label{app:comdetails}
To provide a comprehensive comparison, we categorize the baseline methods into three groups: multi-step diffusion models, distillation-based one-step models, and consistency-based models. Below, we summarize the key characteristics and mechanisms of each method.

\paragraph{Multi-step Diffusion Baselines.}

1. DDPM (Denoising Diffusion Probabilistic Models) \cite{ho2020denoising}
A foundational diffusion model that learns to reverse a fixed Markovian noising process via iterative denoising. The model is trained using a mean squared error (MSE) loss between predicted and actual noise. Sampling typically requires hundreds to thousands of steps.

2. Score SDE (Score-based Generative Modeling via Stochastic Differential Equations) \cite{song2020score}
A Generalization towards continuous time via stochastic differential equations (SDEs). The model is trained via score matching and enables sampling using reverse-time SDE or ODE solvers, improving sample quality and flexibility.

3. DDIM (Denoising Diffusion Implicit Models) \cite{song2020denoising}
A non-Markovian and deterministic variant of DDPM that introduces a sampling ODE, enabling fast sampling with fewer steps while preserving high visual fidelity. It is compatible with pre-trained DDPM models without requiring retraining.

4. EDM (Elucidating the Design Space of Diffusion Models) \cite{karras2022elucidating}
A state-of-the-art diffusion framework with improved noise scheduling (log-normal distribution) and score network preconditioning. It achieves superior performance across datasets and serves as the backbone in our proposed framework.

\paragraph{Distillation-based One-step Models.}

1. KD (Knowledge Distillation) \cite{luhman2021knowledge}
A straightforward distillation approach where a student model learns to replicate the output of a pre-trained diffusion teacher via MSE loss. While efficient, performance can degrade with large step-size reductions.

2. PD (Progressive Distillation) \cite{salimans2022progressive}
Gradually distills a high-step teacher into a low-step student via intermediate models, reducing the number of steps incrementally. This approach improves stability and fidelity over direct distillation.

3. CD (LPIPS) (Consistency Distillation with Perceptual Loss) \cite{song2023consistency}
Extends distillation by employing perceptual similarity metrics (LPIPS) as the training objective. It improves visual quality by preserving perceptual features across time steps.

4. DMD (Distribution Matching Distillation) \cite{yin2024one}
Aims to directly match the student’s output distribution to that of the teacher using divergence-based objectives (e.g., KL divergence). This results in more accurate distribution alignment and improved sample diversity.

5. ReFlow Series (Rectified Flow and Distilled Variants) \cite{liu2022rectified, lee2024improving}
Utilizes a rectified flow framework to optimize ODE trajectories for efficient sampling. Variants such as 1-, 2-, and 3-Rectified Flow, and 2-Rectified Flow++ offer different trade-offs between speed and quality. Distilled versions further reduce sampling cost to 1 step with minimal performance loss.

\paragraph{Consistency-based Models.}

1. CT / iCT / iCT-deep (Consistency Training) \cite{song2023consistency,song2023improved}
CT models are trained to produce consistent outputs when given different noise levels. Improved variants (iCT and iCT-deep) enforce deeper consistency across time steps, using perceptual and cosine similarity losses to enhance training stability and sample quality.

2. ECM (Easy Consistency Models) \cite{geng2024consistency}
Simplifies consistency training by initializing model parameters with those from a pretrained score-based model. This reduces training complexity and improves convergence without requiring complex schedule tuning.

\section{Broader Impact}
\label{app:broader impact}
The proposed Hierarchical Koopman Diffusion (HKD) framework introduces a novel approach to balancing sampling efficiency and interpretability in generative modeling, which may have a significant and multifaceted societal impact. By enabling one-step image synthesis with transparent generative dynamics, HKD has the potential to make image generation systems more accessible, controllable, and explainable — attributes that are essential for responsible deployment in sensitive domains such as design, and scientific simulation.

The explicit modeling of generative trajectories and access to intermediate states could enhance human-AI collaboration, enabling users to guide or edit synthetic content with semantic intent, thus reducing risks of undesired generation and improving user trust. Furthermore, the spectral interpretability introduced by our framework can offer insights into the internal mechanics of generative models, fostering research in model debugging, safety validation, and fairness auditing.

In the broader landscape, our work contributes to the growing field of explainable generative AI, reinforcing the possibility of building models that are not only performant but also transparent and controllable. We hope this inspires further research toward interpretable, reliable, and socially responsible generative systems.

\end{document}